\pgfplotsset{compat=newest}
\newcounter{loopcntr}
\newcommand{\ELL}{q}
\newcommand{\Xtr}{\X_{\text{tr}}}
\newcommand{\ytr}{\y_{\text{tr}}}
\newcommand{\PPi}{\boldsymbol{\P}}
\newcommand{\Qed}{\hspace*{\fill}$\blacksquare$\par\smallskip}
\newcommand{\yh}{\hat{y}}
\renewcommand{\v}{\bm{v}}
\renewcommand{\a}{\bm{a}}
\newcommand{\M}{\bm{M}}
\newcommand{\h}{\bm{h}}
\newcommand{\zero}{\bm{0}}
\DeclareMathOperator{\sign}{sign}
\DeclareMathOperator{\tr}{tr}
\newcommand{\del}{\bm{\delta}}
\newcommand{\y}{\bm{y}}
\DeclareMathOperator{\diag}{diag}
\newcommand{\rank}{\text{\normalfont rank}}
\newcommand\sbullet[1][.5]{\mathbin{\vcenter{\hbox{\scalebox{#1}{$\bullet$}}}}}
\newcommand{\wstardot}{%
  \mathrel{\vbox{\offinterlineskip\ialign{%
    \hfil##\hfil\cr
    $\scriptscriptstyle\sbullet[0.4]$\cr
    \noalign{\kern+.4ex}
    $\w$\cr
}}}^*\!\!(t)}
\DeclareMathOperator*{\argmin}{\mathop{\mathrm{argmin}}}
\newcommand{\half}{\sfrac12}
\renewcommand{\u}{\bm{u}}
\newcommand{\s}{{\bm{s}}}
\newcommand{\st}{{\tilde{\s}}}
\newcommand{\x}{\bm{x}}
\newcommand{\e}{\bm{e}}
\renewcommand{\H}{\bm{H}}
\newcommand{\Ht}{\bm{\widetilde{H}}}
\newcommand{\Z}{\bm{Z}}
\newcommand{\X}{\bm{X}}
\newcommand{\Y}{\bm{Y}}
\newcommand{\I}{\bm{I}}
\renewcommand{\v}{\bm{v}}
\renewcommand{\t}{{\bm{\theta}}}
\newcommand{\one}{\bm{1}}
\newcommand{\A}{\bm{A}}
\renewcommand{\b}{\bm{b}}
\newcommand{\RR}{\mathbb{R}}
\newcommand{\U}{\bm{U}}
\renewcommand{\P}{\bm{P}}
\newcommand{\W}{\bm{W}}
\newcommand{\w}{{\bm{w}}
}
\definecolor{darkgreen}{rgb}{0.09, 0.45, 0.27}
\DeclareMathOperator{\Mb}{\mathbf{M}}
\DeclareMathOperator{\EE}{\mathbb{E}} % Expectation
\DeclareMathOperator{\MM}{\mathbb{M}} % Median
\begin{document}
\title[Spindly linear network whips any neural network]{A case where a spindly two-layer linear network\\
whips any neural network with a fully connected input layer}
 \altauthor{%
  \Name{Manfred K. Warmuth} \Email{manfred@google.com}\\
 \addr Google Research, Mountain View, CA
 \AND
 \Name{Wojciech Kot{\l}owski} \Email{kotlow@gmail.com}\\
 \addr Poznan University of Technology, Poznan, Poland
 \AND
 \Name{Ehsan Amid} \Email{eamid@google.com}\\
 \addr Google Research, Mountain View, CA}

%\altauthor{%
% \Name{Anonymous Author(s)} \Email{email@email.com}\\
%\addr Affiliation(s)\vspace{3cm}
%}

% \editor{}
\maketitle
\begin{abstract}
    It was conjectured that any neural network
    of any structure and arbitrary differentiable
    transfer functions at the nodes cannot learn the following
    problem sample efficiently when trained with gradient descent:
    The instances are the rows of a $d$-dimensional Hadamard matrix
    and the target is one of the features, i.e. very sparse. We essentially prove this
    conjecture:
    We show that after receiving a random training set of size $k < d$, the expected square loss is still $1-\sfrac k{(d-1)}$. The only
    requirement needed is that the input layer is fully
    connected and the initial weight vectors of the input nodes
    are chosen from a rotation invariant distribution.

    Surprisingly the same type of problem can be solved
    drastically more efficient by a simple 2-layer
    linear neural network in which the $d$ inputs are connected to
    the output node by chains of length 2 (Now the input
    layer has only one edge per input). When such a network
    is trained by gradient descent, then
    it has been shown that its expected square loss is $\frac{\log d}{k}$.

    Our lower bounds essentially show that
    a sparse input layer is needed to sample efficiently learn sparse targets
    with gradient descent when the number of examples is
    less than the number of input features.
\end{abstract}
\pagestyle{plain}
\maketitle

%\author{\name Anonymous Author(s) \email email\\
%\name Anonymous Author(s) \email email\\
%\addr Affiliation(s)}

%\editor{}

\maketitle
%\pagestyle{plain}
%\maketitle

\section{Introduction}
Neural networks typically include fully connected layers as part of their architecture.
We show that this comes at a price. Networks with a fully
connected input layer are {\it rotation invariant} in the
sense that rotating the input vectors does not affect the
gradient descent training and the outputs produced by the
network during training and inference.
More precisely for rotation invariance to hold, the distributions used for choosing the
initial weight vectors for the fully connected input layer must be zero or
rotation invariant as well.
With this mild assumption, we will show that if the input vectors are rows of a $d$-dimensional Hadamard matrix and if the target is one of the $d$ features
of the input, then after seeing $k$ examples
any such gradient descent trained network has expected square loss at least
$1-\sfrac k{(d-1)}$. That is, after seeing $\sfrac {(d-1)}2$ examples,
the square loss is at least $\sfrac12$.
Such a hardness result was conjectured in~\citep{span2} for
any neural network trained with gradient descent without the
additional assumption that the input layer is fully connected and initialized
by a rotation invariant distribution.

The lower bounds proven here are complemented by a recent result in~\citep{wincolt20} which shows that a simple sparse 2-layer linear neural
network when trained with gradient descent
on $k$ examples has expected square loss $\frac{\log d}{k}$.%
\footnote{For the sake of simplicity we only quote the
bound when the target is a single noise-free feature. The gradient
algorithm does one pass over the examples and after each of the
$k$ examples is processed, forms a hypothesis by clipping its
predictions. The bound is proven using an on-line-to-batch
conversion, i.e. the algorithm predicts randomly with one
of the past $k$ hypotheses.}
The network in question can be seen as a single neuron
where each of the $d$ edges from the inputs
\begin{forest}
    for tree={circle,draw,scale=.40,grow=0}
    [[]]
\end{forest}
is replaced by a duplicated (or ``squared'') edge
\begin{forest}
    for tree={circle,draw,scale=.40,grow=0}
    [[[]]]
\end{forest}
(Figure \ref{fig:spindly}).

\begin{wrapfigure}{r}{0.36\textwidth}
        \vspace{-0.7cm}
\begin{framed}
    \begin{center}\hspace{-0.2cm}output $\yh$
    \end{center}
        \vspace{-0.7cm}
    \begin{center}
\begin{forest}
for tree={circle,draw,l sep=18pt,s sep=27pt, scale=.45}
[,
    [ [ ] ]
    [ [ ] ]
    [  ,edge label={node[midway,left=-1mm] {$u_i$}}
       [ ,edge label={node[midway,left=-1mm] {$u_i$}}
         ] ]
    [  [ ] ]
    [  [ ] ]
]
\end{forest}
       \end{center}
\vspace{-.4cm} \hspace{1.8cm} input $\x$
   \vspace{-.1cm}
%\captionsetup[wrapfigure]{format=plain}
\caption{Reparameterizing the weights $w_i$ of a
            linear \mbox{neuron} by
	    $u_i^2$.}\label{fig:spindly}
\vspace{-0.2cm}
        \end{framed}
        \vspace{-0.78cm}
\end{wrapfigure}
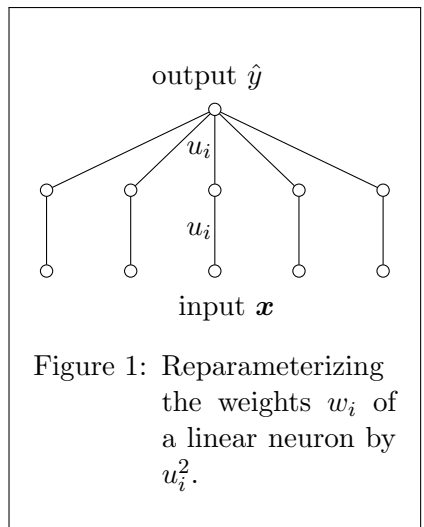

We introduce a number of novel lower bounding techniques in
this paper. The lower bounds have two components:
First, a mechanism is needed for assuring that the neural network does not
get some key information about which examples it has received.
This is achieved by the orthogonality of the instances and
by our assumptions about the neural networks that makes the
predictions of the network invariant to rotating the
inputs, i.e. fully connected input layer and with rotation invariant initialization
that is trained with gradient descent.
Second, we must assure that the network cannot infer the target from
the labels (while the number of training examples $k$ is less than $d$).
We do this by randomly sign flipping or
complementing the $k$ instances that are given to the network.
An alternate method is to permute the instances.
With these two components in place, we show that such neural
networks still have loss $1-\sfrac {k}{(d-1)}$ after seeing $k < d$
examples.
In the lower bounds, our assumptions assure that each input weight vector equals its
initialization plus a linear combination of the past instances.
Since the unseen instances are orthogonal and the network is
uncertain about the target, the best way to predict on the
unseen instances is zero and this gives the lower bound.
When the instances are rows of Hadamard matrices or
shifted bit versions of these matrices then
lower bounds hold for learning a single feature.
We also show that a slightly weaker $(1-\sfrac kd)^2$ lower bound holds
for learning a single feature when the components of the instances are Gaussian
  instead of $\pm1$-valued orthogonal instances as for Hadamard.

In a second part we greatly expand the SVD based lower
bound technique of~\cite{span}. This technique
is for learning features of the Hadamard instances,
however now rows of the Hadamard matrix can be expanded
with an arbitrary $\phi$ map and we do not require rotationally invariant initialization.
When learning Hadamard features with fully connected
linear neural networks using gradient descent,
then the rank $r$ of a certain matrix is
the key limiting factor: The $i$-th column of that
matrix is the combined weight vector after seeing $k$ examples
with target feature $i$.
One can show that the average loss of the gradient descent
based algorithm over the $d$ target features is
at least $1-\sfrac r d$ and the
rank for a single layer is at most $k+1$ and for two
layers, at most $2k+1$ (These rank bounds hold for any
initialization).
\begin{wrapfigure}{r}{0.49\textwidth}
        \vspace{-0.5cm}
\begin{center}
\includegraphics[width=.2\textwidth]{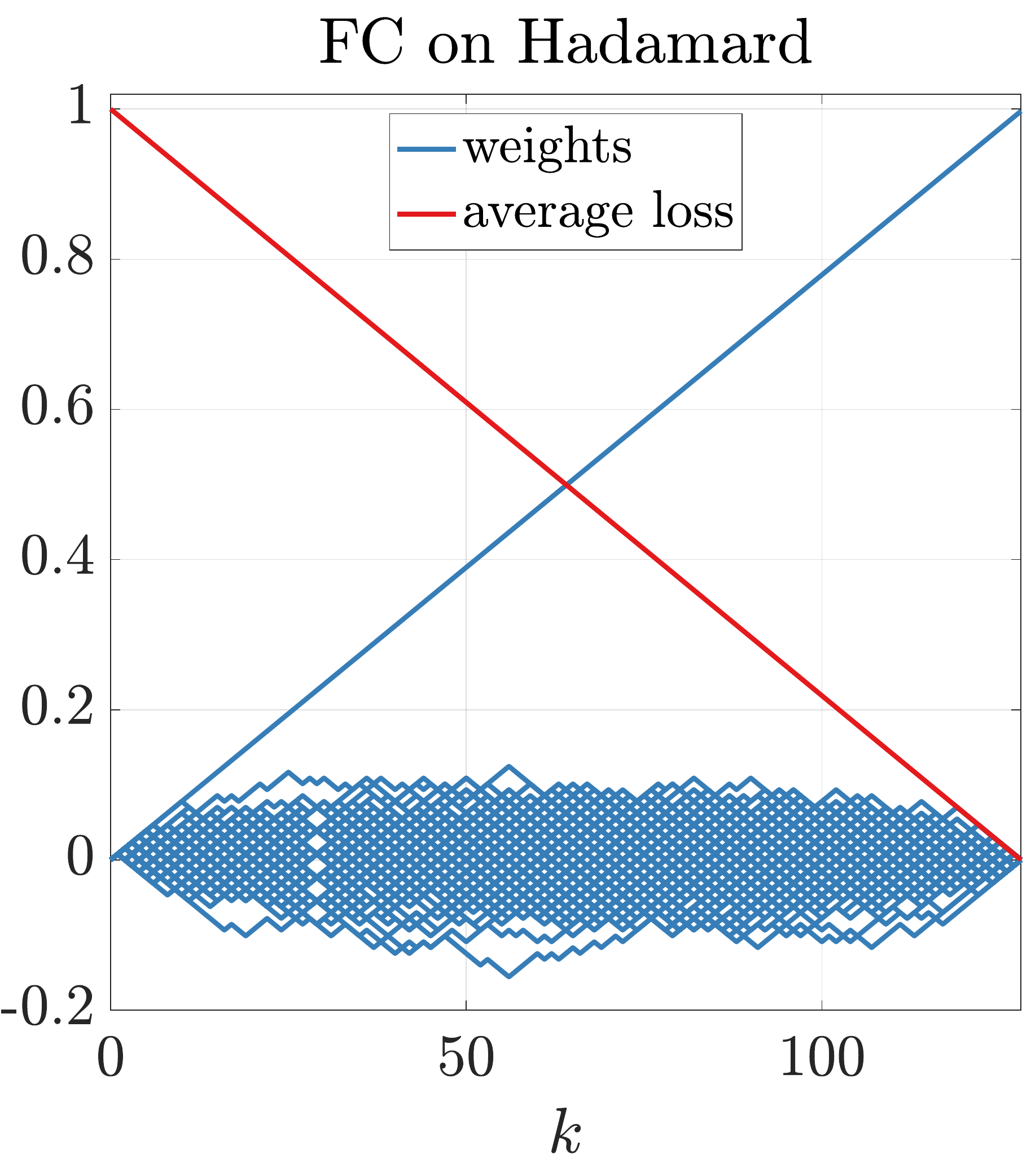} \includegraphics[width=.2\textwidth]{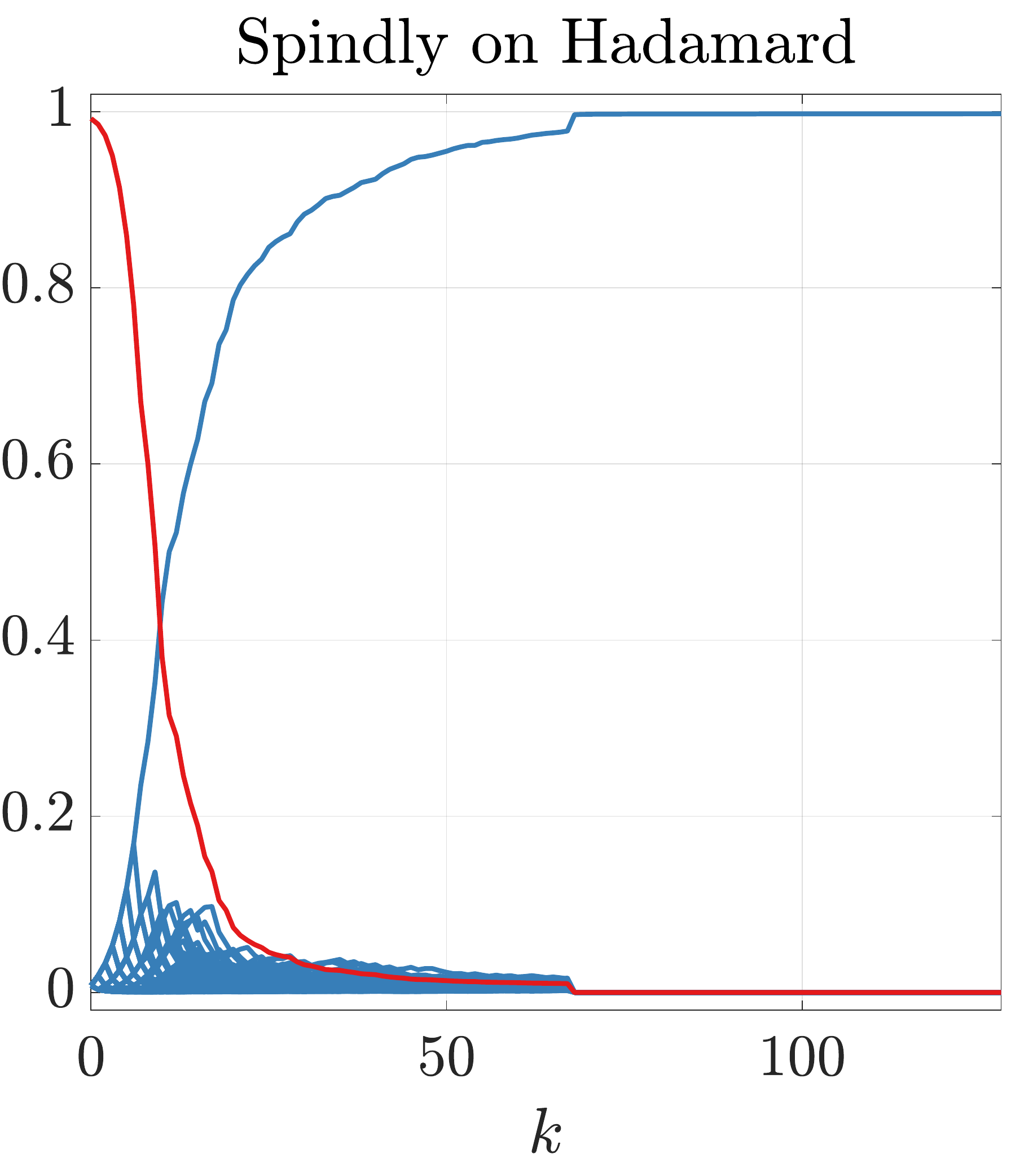}\\
\vspace{-0.3cm}
\includegraphics[width=.2\textwidth]{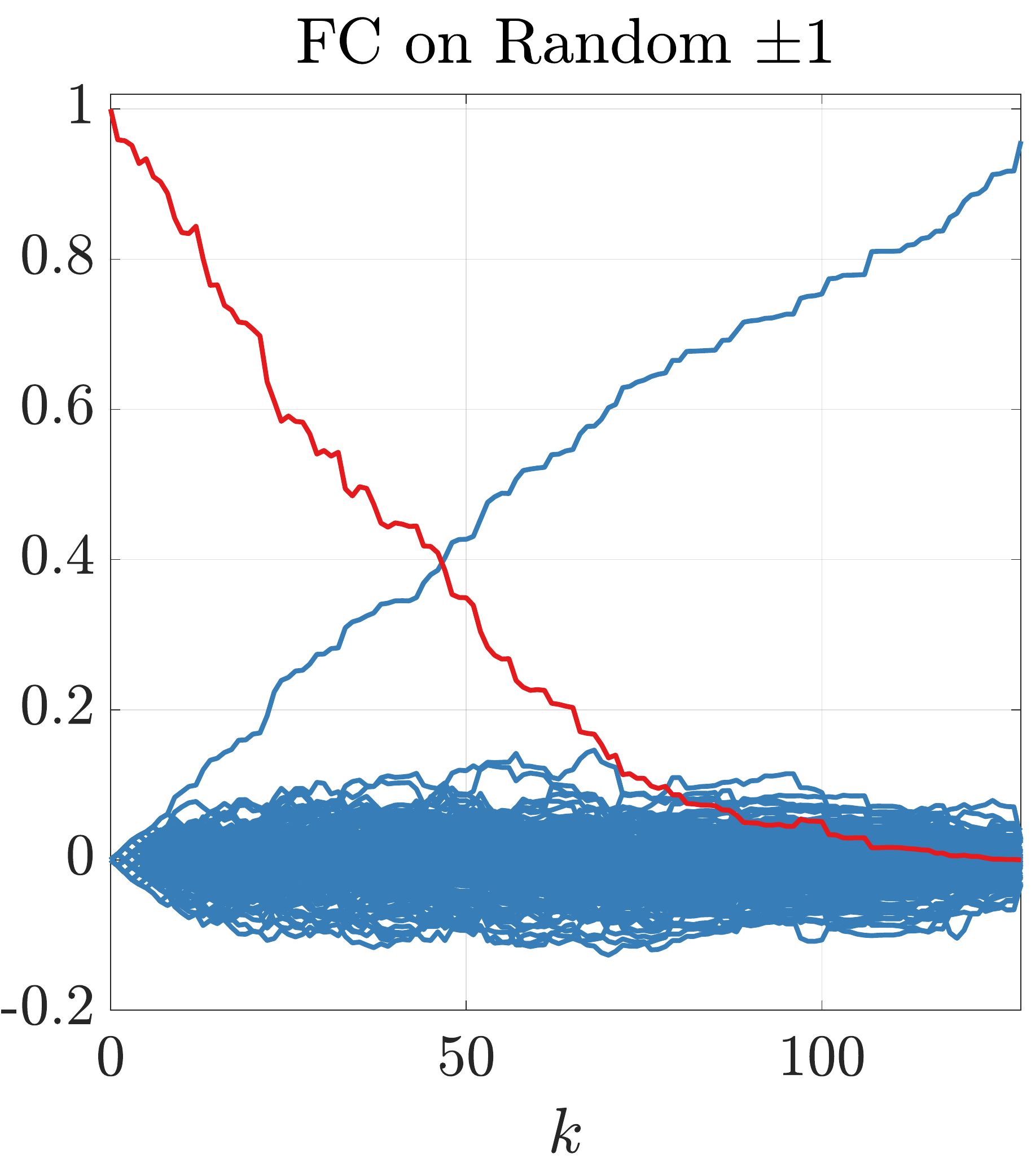} \includegraphics[width=.2\textwidth]{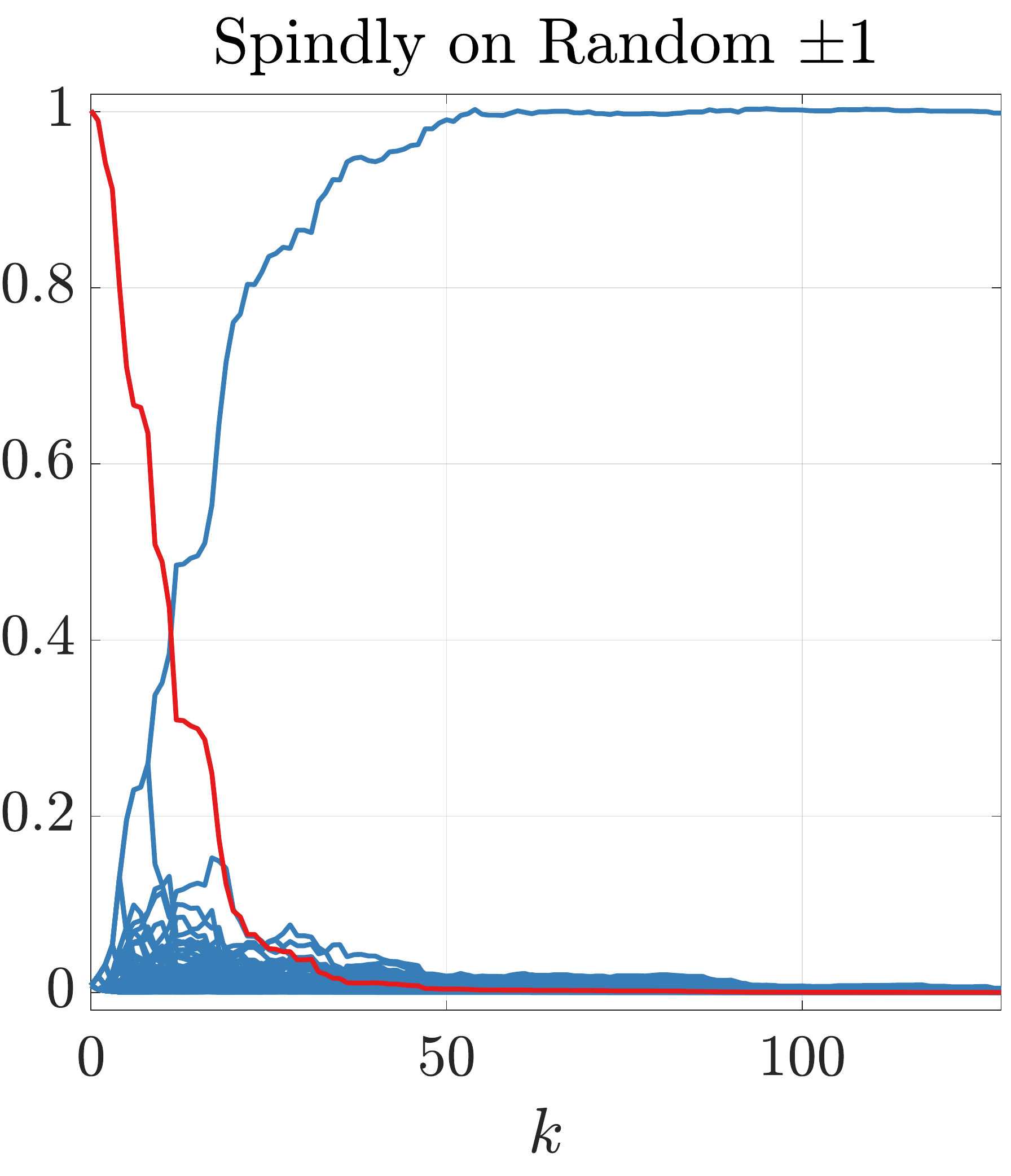}
\end{center}
\vspace{-0.7cm}
\caption{SGD on the Linear fully connected neuron (left) and spindly network
(right) on Hadamard (top) and random $\pm 1$ matrices (bottom).}\label{fig:gd_vs_spindly}
\vspace{-0.5cm}
\end{wrapfigure}
This second technique gives incomparable lower bounds. In
particular it gives linear lower bounds when the
instance matrix is random $\pm 1$ or $\{0,\!1\}$ or Gaussian instead of Hadamard.
However the rank technique only applies to
gradient descent trained fully connected linear neural networks of up to two layers.
The rank $r$ is not easily bounded already for three layers.
In contrast, the rank immediately becomes $d$ after seeing $\log d$ many examples
when learning with the spindly network.

    Curiously, the Hadamard problem can be seen as an exponential expansion of a
    cryptographically secure $\log d$ bit XOR problem~\citep{xor}.
    This allows multiplicative updates
    and their gradient descent reparameterizations as the
    spindly network to learn this problem sample efficiently,
    while no expansion allows kernel method to avoid the
    linear lower bounds. We expand on this in Appendix
    \ref{a:hadam}.

\paragraph{Previous work}

The key idea of using squared weight parameters goes back to
\citet{srebro1}, who showed convergence of continuous gradient descent (CGD)
on the squared weight reformulation to the
minimum $\mathrm{L}_1$-norm solution in a matrix factorization context.
The matrix analysis includes the spindly network as a special case
where the factors are diagonal. It was shown later in~\citep{regretcont}
that CGD on the spindly network equals to continuous
unnormalized exponentiated gradient update (EGU) on a single neuron.%
\footnote{EGU is the paradigmatic multiplicative update
(a mirror descent update based on the $\log$ link).}
\citet{wincolt20} also showed worst case regret bounds for the spindly network
with discrete GD updates. These bounds closely match the original bound shown previously
in~\citep{eg} for EGU.
In contrast, linear lower bounds for GD on a single linear neuron were
shown in \citep{span} when the instances are the $\pm 1$ valued rows of a Hadamard matrix
and the target is a single feature.
The linear behavior of SGD for single linear neurons holds
experimentally even when the instances are random
$[-1,1]^d$ or $[0,1]^d$ vectors. On the other hand, the
regret bounds for the spindly network was shown for the
case where instances are from the domain
$[0,1]^d$.\footnote{More generally, for $[0,R]^d$ for some
$R > 0$.} Nevertheless, the exponential decay of the average
loss of SGD on the spindly network can be shown
experimentally even when instances are random $[-1,1]^d$ vectors (see Figure~\ref{fig:gd_vs_spindly} for a comparison). Note that~\citet{wincolt20} discussed
this observation only on an experimental level. Also, the
domain mismatch in the upper and lower bounds was not addressed
in the previous work. In order to close this gap, we extend the linear lower bound
for a single linear neuron trained with GD to the case where the instances are in $[0,1]^d$.
The technique for proving this type of lower bound
(called second technique above) was originally introduced in \citep{span}.
By averaging over targets, the linear lower bound was shown for learning Hadamard
features even if the instances are embedded by an arbitrary $\phi$ map,
thus showing that Hadamard features cannot be learned by any kernel method.
This type of lower bound relies on the flat SVD spectrum of the Hadamard matrix.
The spectrum of random $\pm 1$ matrices is also
sufficiently flat to lead to a lower bound for any kernel
methods.\footnote{%
With some additional combinatorial techniques, the method
also leads to linear lower bounds for single neurons with
essentially any transfer function \citep{span2}.}
In this paper, we apply this second proof methodology to obtain similar
linear lower bounds for GD trained 2-layer fully connected linear neural networks
with arbitrary initialization.

Note that this paper is not per se about hardness results
for learning certain unusual functions with neural networks \citep{matus,shamir}.
Instead we build on the work of \citet{span,rotinv} that
study algorithms i.t.o. their invariance properties
and prove linear lower bounds for rotation invariant algorithm
(that include GD trained neural networks with fully connected input
layers). These lower bounds can be bypassed by sparsifying
the input layer (Figure~\ref{fig:spindly}).

\paragraph{Outline.} We begin with some basic definitions in Section~\ref{s:notation}
and proceed to prove our single feature lower bounds in Section~\ref{s:single}.
We then discuss the weaknesses of these bounds and then give extensions of the SVD based
lower bound techniques (mostly relegated to the appendices). Open problems are discussed in the conclusion section.
\section{Notations and setup}
\label{s:notation}
    An {\it example} $(\x,y)$ consists of a $d$-dimensional
    input vector $\x$ and real label $y$.
    We specify a learning problem as a tuple
    $(\underset{n, d}{\X},\underset{n}{\y})$ containing $d$ examples,
    where the rows of \emph{input matrix}
    $\X$ are the $n$ (transposed) input vectors and the \emph{target} $\y$ is a vector
    of their labels. For the sake of proving lower bounds, we will also
    consider learning problems having multiple targets
    $(\underset{n, d}{\X},\underset{n, m}{\Y})$,
    where now the columns of $\Y$ specify the separate targets.
    The training set always consists of the first $k\le n$
    examples of the problem $(\underset{k,d}{\Xtr},\underset{k}{\ytr})=(\X_{1:k,:},\y_{1:k})$.
    If there is more than one target,
    then the labels must come from the same target,
    i.e. $\y = \Y \e_i$ for some $i=1,\ldots,m$ (where $\e_i$ is a vector with $1$ on the
    $i$-th coordinate, and zeros elsewhere).

    A \emph{prediction algorithm} is a real valued function
    $\yh(\x \,|(\X_{\mathrm{tr}},\y_{\mathrm{tr}}))$
    where $\x\in \RR^d$ is the next input vector / test vector and
    $(\X_{\mathrm{tr}}, \y_{\mathrm{tr}})$ the past training examples.
    With a slight abuse of the definition,
    we allow the function value of $\yh$ to be randomized (a random variable),
    based on some internal randomness of the algorithm.
    The accuracy of prediction $\yh$ on an example $(\x,y)$ is measured by means
    of a square loss $(y - \yh)^2$.

    \begin{remark}
    \label{d:loss}
	Our main lower bounds (Theorems \ref{t:flippm} and \ref{t:flip01})
    apply to a general class of losses $L(y,\yh)$, for which the only requirement is
    that when predicting a label $y\in\{+1,-1\}$,
    $\min_{\yh}\frac{L(-1,\yh)+L(+1,\yh)}2$ is some positive constant $c$.
    Some of our lower bounds use $y\in\{0,1\}$ instead.
    In that case we need the requirement
    $\min_{\yh}\frac{L(0,\yh)+L(1,\yh)}2$ is some
    positive constant $c'$.
    For the lower bound to apply to a neural network training with gradient descent,
    we additionally need $L(y,\yh)$
    to be differentiable in $\yh$.
    Precisely, the constants $c$ or $c'$ enter into the lower bounds.
    But this detail is a distraction and for the sake of concreteness, we state our theorems for the squared
    loss $L(y,\yh)=(y-\yh)^2$ (when $c=1$ and $c'=\sfrac 14$)
    and point out for when a linear lower bound
    still holds with the above weaker definition.
    \end{remark}
For the sake of simplicity, we avoid the discussion of randomized
algorithms in the body of the paper by assuming that the loss is square loss
and thus convex. In that case, any randomized algorithm $\yh$
can be turned into a deterministic algorithm $\yh_{\det}$:
\[
\yh_{\det}(\x| (\X_{\mathrm{tr}}, \y_{\mathrm{tr}})) =
\mathbb{E} \left[\yh(\x| (\X_{\mathrm{tr}},
\y_{\mathrm{tr}})) \right]\;
\text{(expectation w.r.t. internal randomization)},
\]
which by Jensen's inequality has loss no greater than the expected loss of
$\yh$ on any instance:
\[
(\yh_{\det}(\x| (\X_{\mathrm{tr}}, \y_{\mathrm{tr}})) - y)^2
= \left( \mathbb{E} \left[\yh(\x| (\X_{\mathrm{tr}}, \y_{\mathrm{tr}})) \right]  - y\right)^2
\le \mathbb{E}\left[ \left( \yh(\x|
(\X_{\mathrm{tr}}, \y_{\mathrm{tr}}))  - y\right)^2\right].
\]
    A prediction algorithm $\yh(\cdot\, | (\X_{\mathrm{tr}},
\y_{\mathrm{tr}}))$ is called \emph{rotation invariant}
\citep{span} if for any orthogonal matrix $\underset{d,d}{\U}$ and any input $\x\in\RR^d$:
\begin{equation}
\yh(\U \x | (\X_{\mathrm{tr}} \U^\top, \y_{\mathrm{tr}}))
= \yh(\x | (\X_{\mathrm{tr}}, \y_{\mathrm{tr}}))\, .
\label{eq:def_rot_inv}
\end{equation}
In other words, the prediction $\yh(\x, (\X_{\mathrm{tr}}, \y_{\mathrm{tr}}))$ for any input $\x$ remains the same if we rotate both $\x$ and $\X_{\mathrm{tr}}$ by matrix $\U$.
If the algorithm is randomized, then $\yh$ is a random
variable and the equality sign in definition \eqref{eq:def_rot_inv} should be interpreted as ``identically distributed''.
Our lower bounds will be for a rotation invariant subclass
of neural net algorithms.

In this paper, a {\it neural network with a fully connected input layer} is any
    real valued prediction function of the form
    $\yh= N(\underset{d}{\x}^\top\underset{d,h}{\W},\Z)$,
    where $\x$ is a real input vector, the $h$ columns of $\W$
    are the weight vectors at the $h$ input neurons,
    and $\Z$ is a fixed set of additional weights (in the upper layers).
    We only require that $N$ must be
    differentiable in $\W$. Note that the parameters $(\W,\Z)$ naturally depend on the examples $(\X_{\tr},\y_{\tr})$.

We claim that a neural network is rotation invariant, if (i) it
has a fully connected input layer in which (ii) each input node is
initialized to zero or the input layer is initialized randomly to $\underset{d,h}{\W_0}$
which has a rotation invariant distribution,
i.e. for any orthogonal matrix $\U$,
$\W_0$ and $\U \W_0$ are identically distributed.
Furthermore (iii), the input layer is updated with gradient descent on
the training set $(\Xtr,\ytr)$, and (iv) the updates of the additional weights $\Z$ depend
on $\Xtr$ only through the outputs of the bottom layer $\Xtr \W$ (as in, e.g., gradient descent updates).

To prove this claim, let $\W_t$ and $\Z_t$ be the weights of the network after initializing at $\W_0$ and $\Z_0$ and running $t$ steps of gradient descent.
We show that if
the initial weight matrix and the training set are
rotated by an orthogonal matrix $\U$, i.e.,
$\W_0 \rightarrow \U \W_0$ and $\Xtr \rightarrow \Xtr \U^\top$,
then the weights of the network undergo transformation $\W_t \rightarrow \U \W_t$ and $\Z_t \rightarrow \Z_t$ for all $t$.
The latter is straightforward from the former and the fact that $\Z_t$ depend
on $\Xtr$ only through $\Xtr \W_{j}$ $(j=0,\ldots,t-1)$, which remains invariant under $\U$:
$\Xtr \W_j \rightarrow \Xtr \U^\top \U \W_j = \Xtr \W_j$.

To show $\W_t \rightarrow \U \W_t$ by induction on $t$,
note that the gradient of the loss with respect to $\W_{t-1}$ on an example $(\x,y)$ has the form $\x \boldsymbol{\delta}^\top$, where
$\boldsymbol{\delta} = \nabla_{\a} N(\a,\Z_{t-1})$ where
$\a = \x^\top \W_{t-1}$ are the current linear activations at the input layer. Using the inductive hypothesis $\W_{t-1} \rightarrow \U \W_{t-1}$ and $\Z_{t-1} \rightarrow \Z_{t-1}$,
every such linear activation $\a$ (and thus also $\boldsymbol{\delta}$) is invariant under
$\U$ since $\a \rightarrow \x^\top \U^\top \U \W_{t-1} = \x^\top \W_{t-1} = \a$
and so the gradient undergoes a transformation $\x \boldsymbol{\delta}^\top \rightarrow \U \x \boldsymbol{\delta}^\top$. This let us conclude that $\W_t$ obtained from $\W_{t-1}$
by adding another gradients w.r.t. some examples from $(\Xtr,\ytr)$ undergoes a transformation $\W_t \rightarrow \U \W_t$.

Hence the prediction of the rotated
network on input $\x \rightarrow \U \x$ remains the same,
$$N(\x^\top\U^\top \U \W_t,\Z_t) = N(\x^\top \W_t,\Z_t),$$
and the rotation invariance follows by recalling that
$\U \W_0$ is distributed the same as $\W_0$.

\section{Lower bounds for single target, any rotation
invariant algorithm}
\label{s:single}
 Hadamard matrices are square $\pm 1$ matrices whose
 rows/columns are orthogonal.
 We assume the first row and column consists of all ones.
 We begin with a problem in which the instances are the
 rows of a $d$-dimensional Hadamard matrix and there is a single target ($m=1$),
 which is the constant function 1, i.e.  the problem
 $(\underset{d,d}{\H},\underset{d,1}{\one})$.
 This is too easy, as any algorithm which always predicts $1$ has zero loss.
 So for each sign pattern $\s\in \{+1,-1\}^d$, consider the
 problem in which the rows of $\H$ and $\one$ are sign
 flipped by $\s$, i.e. the problem $(\diag(\s)\H,\s)$.
 Note that no matter what sign pattern $\s$ we use,
 the linear weight vector $\e_1$ has zero loss on this problem,
 as $\diag(\s) \H \e_1 = \diag(\s) \one = \s$.
 Since the sign pattern is chosen uniformly,
 the algorithm is unable to learn from the labels and needs to rely on
 the information in the inputs $\diag(\s) \H$. Below
 we show that no rotation invariant algorithm is able exploit this
 information beyond the training examples and thus can't do better than
 random guessing on the unseen examples.
 See Appendix \ref{a:informal} for an informal proof for the neural net case.
   \begin{theorem}\label{t:flippm}
       For any rotation invariant algorithm receiving the first
       $k$ examples of the problem $(\diag(\s)\underset{d,d}{\H},\s)$
       where the random sign pattern $\s\in \{-1,+1\}^d$ is chosen
       uniformly, the expectation w.r.t. $\s$ of the average square loss on all $d$ examples
       is at least $1-\sfrac kd$.
   \end{theorem}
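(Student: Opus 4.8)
The plan is to reduce to a deterministic algorithm and then show that rotation invariance forces the prediction on any \emph{unseen} example to be blind to the sign $s_j$ that determines that example's label; once the prediction cannot track $s_j$, averaging the square loss over the two equally likely values of $s_j$ already costs at least $1$. First I would invoke the reduction from the setup: since the square loss is convex, replacing a randomized $\yh$ by its expectation can only decrease the loss (Jensen), so it suffices to lower bound the loss of a deterministic rotation invariant algorithm.

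Write $\h_j$ for the $j$-th row of $\H$, so that the $j$-th example of $(\diag(\s)\H,\s)$ is the input $s_j\h_j$ with label $s_j$, and the training set $(\Xtr,\ytr)$ is determined by $s_1,\dots,s_k$ alone. Fix an index $j>k$. Because the rows of a Hadamard matrix are orthogonal, $\h_i^\top\h_j=d\,\delta_{ij}$, the test direction $\h_j$ is orthogonal to every training instance $s_i\h_i$ (here $i\le k<j$, so $i\ne j$). The key step is to exhibit a single orthogonal matrix that fixes the whole training set yet flips the test instance: take the Householder reflection
\[
\U \;=\; \I-\frac{2}{d}\,\h_j\h_j^\top,
\]
which satisfies $\U\h_i=\h_i$ for all $i\ne j$ (hence $\Xtr\U^\top=\Xtr$) and $\U\h_j=-\h_j$. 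Applying the definition of rotation invariance \eqref{eq:def_rot_inv} with this $\U$ and input $\x=s_j\h_j$ gives
\[
\yh\big(-s_j\h_j \,\big|\, (\Xtr,\ytr)\big)
= \yh\big(s_j\h_j \,\big|\, (\Xtr,\ytr)\big),
\]
so the prediction on example $j$ is the same whether $s_j=+1$ or $s_j=-1$; call this common value $p_j$, which depends on $s_1,\dots,s_k$ but not on $s_j$.

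With the prediction on example $j$ pinned to $p_j$, I would condition on $\s_{-j}$ (all signs except $s_j$); since $j>k$ this fixes the training set and hence $p_j$, and I average the loss over the uniform $s_j\in\{-1,+1\}$:
\[
\E_{s_j}\big[(s_j-p_j)^2\big]
=\tfrac12(1-p_j)^2+\tfrac12(-1-p_j)^2
=1+p_j^2\ge 1.
\]
Hence, writing $\yh_j$ for the prediction on example $j$, we get $\E_\s[(s_j-\yh_j)^2]\ge 1$ for every unseen index $j>k$, while each of the $k$ seen examples contributes a nonnegative loss. Summing over $j=1,\dots,d$ and dividing by $d$ yields
\[
\E_\s\!\left[\frac1d\sum_{j=1}^d (s_j-\yh_j)^2\right]
\ge \frac{1}{d}\sum_{j=k+1}^d 1
= \frac{d-k}{d}=1-\frac kd,
\]
which is the claimed bound.

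The main obstacle is the middle step: constructing one orthogonal map that simultaneously leaves the \emph{entire} training set invariant and negates the test instance. The point to get right is that the prediction argument $s_j\h_j$ itself moves with $s_j$, so what is needed is not merely independence of the training data but exact symmetry of $\yh$ under $\h_j\mapsto-\h_j$ with the training set held fixed; this is precisely what the orthogonality of the Hadamard rows (placing $\h_j$ in the orthogonal complement of the training span) together with the Householder reflection delivers. A secondary point to check is that the randomized-to-deterministic reduction preserves this sign-symmetry, which it does because rotation invariance for a randomized algorithm means the two predictions are identically distributed, so their expectations coincide.
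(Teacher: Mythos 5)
Your proof is correct, and it reaches the bound by a route that is recognizably different in its mechanics from the paper's, even though both arguments ultimately exploit the same fact (rotation invariance plus orthogonality of the Hadamard rows makes the unseen sign $s_j$ invisible to the learner). The paper applies a single \emph{global} rotation $\U_\s=\sfrac{1}{\sqrt d}\,\diag(\s)\H$, depending on the entire sign pattern, which canonicalizes every sign-flipped problem to $(\sqrt d\,\I,\s)$; after this change of coordinates the training set is literally identical for all completions of $\s_{(k+1):d}$, the test inputs $\sqrt d\,\e_j$ no longer carry the sign, and the optimal prediction $0$ on each unseen example gives loss at least $1$. You instead keep the problem in its original coordinates and, for each unseen index $j$, use the \emph{local} Householder reflection $\U=\I-\tfrac2d\h_j\h_j^\top$ (orthogonal, fixing every training row, negating $\h_j$) to show directly that the prediction on example $j$ is symmetric under $s_j\mapsto-s_j$, whence the conditional loss is $1+p_j^2\ge1$. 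Your version is more elementary and self-contained at each test point, and it makes explicit the slightly stronger conclusion that the loss is $1+p_j^2$ (so any nonzero prediction strictly hurts); the paper's canonicalization is the form that scales to its other results --- it is reused verbatim for Theorems \ref{t:flip01} and \ref{t:permute} and for the randomized/general-loss extension in Appendix \ref{a:rand}, where one wants the statement that the entire rotated training set is a fixed object independent of the unseen signs. One small point you handled correctly but should keep visible: the reflection must fix the training rows $s_i\h_i$ exactly (not merely their span), and the prediction argument $s_j\h_j$ itself moves with $s_j$, so the symmetry $\yh(-s_j\h_j\mid\cdot)=\yh(s_j\h_j\mid\cdot)$ is exactly what is needed and exactly what the orthogonality of Hadamard rows delivers.
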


   \vspace{-1mm}
   \begin{proof}
    We first rotate the instance matrix of all the $2^d$
    sign flipped problems to a scaled identity matrix. Define a rotation matrix
    $\U_\s$ as $\sfrac{1}{\sqrt{d}}\diag(\s)\H$.
    The predictions of any rotation invariant algorithm on
    $(\diag(\s)\H,\s)$ and $(\diag(\s)\H
    \U^\top_\s,\s)=(\sqrt{d}\I,\s)$ are the same.
    Note that the rotated linear weight of the target $\s$
       is $\U_\s\e_1=\sfrac1{\sqrt{d}}\,\s$.

       Now fix $\s_{1:k}$. The algorithm receives the same first $k$
    training examples for each problem $(\sqrt{d}\I,\s)$.
       Also since $\s_{(k+1):d}$ is chosen uniformly, each of the
    $d - k$ unseen examples is labeled $\pm 1$ with equal
    probability. So the best prediction on these $d\! -\! k$
    examples is 0, incurring square loss at least 1 for each unseen
    example. We conclude that the expected average loss on all $d$
       examples is at least $1-\sfrac kd$.
    \end{proof}
    \vspace{-.5mm}
Note that in this lower bound, the algorithms have
square loss at least $\sfrac 12$ at $k=\sfrac d2$.
An alternate related proof technique, sketched in
Appendix \ref{a:dupl}, flattens the lower bound curve
using a duplication trick that moves the half point arbitrary far out.
Also in Appendix \ref{a:rand}, we strengthen the above
    theorem in two ways: the loss is allowed to be any loss
    satisfying the property specified in Remark \ref{d:loss} and the input nodes can be
    randomly initialized as long as the distributions for
    choosing the initial weight vectors is rotation
    invariant.

    Our next goal is to obtain a similar proof for the
    hardness of learning a constant function when the
    instance domain is $\{0,1\}^d$ instead of
    $\{-1,+1\}^d$. We do this because the upper bounds or
    EGU and its reparameterization with the spindly network
    require non-negative features. As for the above theorem, we
    only give a simplified version below, but this theorem
    can also be generalized using the techniques of
    Appendix \ref{a:rand}.

    The function $\sfrac{(x+1)}2$ shifts a $\pm 1$ variable
    $x$ to $0/1$. We essentially shift the previous proof
    using this transformation. For some reason which will become clear, we first
    need to remove the first row of $\H$ which is all 1.
    Let $\underset{d-1,d}{\Ht}$
    be the Hadamard matrix $\H$ with the first row removed.
    Our initial problem is now
    $\big( \sfrac 12 (\Ht + \underset{\kern-1mm d-1,d \kern-1mm}{\one}), \underset{d-1}{\one} \big)$.
    Since $\Ht\one=\zero$ all $d-1$ rows of $\Ht$ have an
    equal number of $\pm1$'s
    and therefore the rows of $\sfrac 12 (\Ht + \one)$ have an equal number of $0/1$'s.
    Instead of doing random sign flipping of the rows as we did for $\H$,
    we now complement each row of $\sfrac 12 (\Ht + \one)$ with probability $\half$.
    If $\h$ is a row of $\Ht$,
    then $\sfrac 12 (\pm\h + \one)$ are two complementary rows of bits
    that are the shifted versions of $\pm \h$.

    \begin{theorem}\label{t:flip01}
       For any rotation invariant algorithm receiving the first
       $k$ examples of the problem
       $(\sfrac 12 (\diag(\st)\Ht+\one),\sfrac 12 (\st+\one))$
       where the random sign pattern $\st\in \{-1,+1\}^{d-1}$ is chosen
       uniformly, the expectation w.r.t. $\st$ of the
       average square loss on all $d-1$ examples is at least
	$\frac 14 (1- \frac{k}{d-1})$.
   \end{theorem}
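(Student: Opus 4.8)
The plan is to mirror the proof of Theorem~\ref{t:flippm}, but the non-orthogonality of the $\{0,1\}$ instances forces a more careful choice of rotation. Writing the instance matrix as $\X = \sfrac12(\diag(\st)\Ht + \one)$, its rows $\sfrac12(\st_i\h_i + \one)$ all share the common component $\sfrac12\one$ and hence are \emph{not} orthogonal: any two distinct rows have inner product $\sfrac d4$. Consequently no fixed rotation can send $\X$ to a scaled identity, and a naive fixed rotation (e.g. by $\sfrac1{\sqrt d}\H$) would leave the sign $\st_i$ attached to the rotated test instance, letting the algorithm read the label off the instance and collapsing the bound to $0$. The fix is to absorb the signs into a \emph{sign-dependent} rotation, exactly as $\U_\s$ did before.

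Concretely, I would prepend a $+1$ to $\st$ to form $\bar\s := (1,\st)\in\{-1,+1\}^d$ and set the orthogonal matrix $\U_{\st} := \sfrac1{\sqrt d}\diag(\bar\s)\H$. Two facts drive the computation: $\Ht\H^\top = [\,\zero \mid d\I_{d-1}\,]$ (orthogonality of distinct rows of $\H$, and of each row of $\Ht$ with $\one$), and every row of $\one\H^\top$ equals $d\e_1^\top$ (since $\H\one=d\e_1$). Together these give that the $i$-th row of $\X\U_{\st}^\top$ is $\sfrac{\sqrt d}2(\e_1 + \e_{i+1})$: the $\diag(\st)$ inside $\X$ attaches a sign $\st_i$ to coordinate $i+1$, while the trailing $\diag(\bar\s)$ rescales that same coordinate by $\st_i$, cancelling it. The upshot is that this canonical instance matrix does \emph{not} depend on $\st$; the only surviving $\st$-dependence is in the labels $\ytr=\sfrac12(\st_{1:k}+\one)$.

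By rotation invariance the predictions on the stated problem and on its rotation $(\X\U_{\st}^\top,\sfrac12(\st+\one))$ coincide, so I may argue on the canonical problem, whose instances are the fixed vectors $\sfrac{\sqrt d}2(\e_1+\e_{i+1})$ and whose labels alone carry the randomness of $\st$. Fix $\st_{1:k}$; the algorithm then sees a fixed training set, and for any unseen example $i>k$ its prediction $p_i := \yh\!\big(\sfrac{\sqrt d}2(\e_1+\e_{i+1}) \mid \Xtr\U_{\st}^\top,\ytr\big)$ is a fixed number depending only on $\st_{1:k}$, not on $\st_i$. Since $\st_i$ is uniform and independent of the training labels, the label $y_i=\sfrac12(\st_i+1)$ is $0$ or $1$ with equal probability, so $\E_{\st_i}[(y_i-p_i)^2] = \half\big((1-p_i)^2 + p_i^2\big) \ge \sfrac14$, the minimum attained at $p_i=\half$. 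Averaging this per-example bound over the $(d-1)-k$ unseen examples and dividing by the $d-1$ total examples yields $\sfrac14\big(1-\sfrac k{d-1}\big)$.

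The substantive obstacle is exactly the non-orthogonality: the shifted instances cannot be orthogonalized by any single rotation, and making the surviving randomness live \emph{only} in the labels is precisely what the $\st$-dependent rotation $\U_{\st}$ accomplishes—so the heart of the argument is verifying that $\X\U_{\st}^\top$ is $\st$-independent. This also explains why the all-ones first row of $\H$ had to be deleted: that row shifts to the degenerate instance $\one$ or $\zero$ and, lacking a private coordinate $\e_{i+1}$, would map entirely onto the shared coordinate $\e_1$, destroying the clean canonical form above. Finally, the change from $\{-1,+1\}$ to $\{0,1\}$ labels is what softens the constant from $1$ to $\sfrac14$.
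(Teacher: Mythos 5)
Your proof is correct and follows essentially the same route as the paper: the same sign-dependent rotation $\U_{\st}=\sfrac1{\sqrt d}\diag([1;\st])\H$, reduction to a canonical $\st$-independent instance matrix via rotation invariance, and the observation that the unseen $\{0,1\}$ labels remain uniform given the training data, so the best prediction $\sfrac12$ costs at least $\sfrac14$ per unseen example. In fact your computation of the rotated instance matrix as $\sfrac{\sqrt d}2[\one\mid\I]$ (rows $\sfrac{\sqrt d}2(\e_1+\e_{i+1})$) is the right one---the paper's displayed $\sfrac{\sqrt{d}}2[\one,\diag(\st)]$ appears to be a typo, since, as you note, the entire point of the $\st$-dependent rotation is that the trailing $\diag([1;\st])$ cancels the signs and leaves a fixed matrix, without which the instance would reveal its label.
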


   \begin{proof}
    We rotate the instance matrices of all the
    $2^{d-1}$ randomly complemented problems to a fixed matrix.
    Define a rotation matrix $\U_\st$ as $ \sfrac{1}{\sqrt{d}} \diag([1;\st]) \H$.
    The predictions of any rotation invariant algorithm on
    the problems 	$(\sfrac 12 (\diag(\st)\Ht+\one),\sfrac 12 (\st+\one))$ and
    their rotations
  	$(\sfrac 12 (\diag(\st)\Ht+\one)\U_{\st}^\top,\sfrac 12 (\st+\one))
    =(\sfrac{\sqrt{d}}2[\underset{d-1,1}{\one},\underset{d-1,d-1}{\diag(\st)}],
    \sfrac 12 (\st+\one))$ are the same.
    Note that the rotated linear weight of the target
       $\sfrac 12 (\st+\one)$ equals $\U_{\st}\e_1 = \sfrac1{\sqrt{d}}\,[1;\st]$.

    Without loss of generality the algorithm receives the first $k$
    examples of each rotated problem during training.
    Since $\st$ is chosen uniformly, each of the
    $d-1-k$ unseen examples is labeled $0/1$ with equal
    probability. So the best prediction on these $d-1-k$
    examples is $\sfrac 12$, incurring square loss at
    least $\sfrac 14$ for each unseen
    example. Thus the expected average loss on
    all $d-1$ examples lower bounded by
    $\frac{d-1-k}{4\,(d-1)}=
    \frac 14 (1- \frac{k}{d-1}).$
   \end{proof}

  We next prove a lower bound for the Hadamard
  problem that was conjectured to be hard to learn by any
  neural network trained with gradient descent (\cite{span2}).
  The constant target is now any fixed column $\h$ of $\H$ other than
  the first one. We exploit that fact that $\h$ has
  an equal number of $\pm 1$ labels and randomly permute the
  rows of $\H$ and the target labels. The only available information
  to the rotation invariant algorithm are the $k$ labels seen in the training sample.
  Therefore, the best prediction the algorithm can be
  deduced from the average count of the remaining labels. On expectation,
  this gives a lower bound of $1-\frac{k}{d-1}$ on the
  average square loss, which is only slightly weaker than the bound of Theorem \ref{t:flippm}.

   \begin{theorem}\label{t:permute}
       For any rotation invariant algorithm receiving the first $k$ examples
       of the problem $(\PPi \H, \PPi \h)$,
       where $\h  = \H \e_i$ for some fixed $i \in \{2,\ldots,d\}$
       is the $i$-th column of $\H$,
       and $\PPi$ is a permutation matrix chosen uniformly at random,
       the expectation w.r.t. $\PPi$ of the average square
       loss on all $d$ examples is at least $1-\frac{k}{d-1}$.
   \end{theorem}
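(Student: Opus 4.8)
The plan is to mimic the rotation-to-canonical-form idea of Theorems~\ref{t:flippm} and~\ref{t:flip01}, but with a twist that accounts for the random permutation rather than a sign flip. First I would rotate every permuted problem by the single orthogonal matrix $\U=\sfrac{1}{\sqrt d}\H$ (which is orthogonal since $\sfrac1d\H\H^\top=\I$). Because the algorithm is rotation invariant, its predictions on $(\PPi\H,\PPi\h)$ coincide with those on the rotated problem $(\PPi\H\U^\top,\PPi\h)=(\sqrt d\,\PPi,\PPi\h)$. The rotated instance matrix $\sqrt d\,\PPi$ has rows that are scaled standard basis vectors, and the crucial bookkeeping observation is that the instance $\sqrt d\,\e_m$ always carries the fixed label $\h_m$, independently of $\PPi$. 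Thus, after rotation, the random permutation merely selects which $k$ of the $d$ pairs $(\sqrt d\,\e_m,\h_m)$ are revealed as training, the seen index set $S$ being a uniformly random $k$-subset of $\{1,\dots,d\}$.

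The heart of the argument is to show that the algorithm is forced to emit one and the same prediction $\yh_0$ on every unseen test instance. Fix the (rotated) training set and take any two unseen indices $m,m'\notin S$; let $\R$ be the orthogonal permutation matrix that transposes coordinates $m$ and $m'$ and fixes the rest. Since $\R$ fixes each training instance $\sqrt d\,\e_s$ with $s\in S$, we have $\Xtr\R^\top=\Xtr$, so applying rotation invariance with $\U=\R$ to the test input $\sqrt d\,\e_m$ gives $\yh(\sqrt d\,\e_{m'}\,|\,(\Xtr,\ytr))=\yh(\sqrt d\,\e_m\,|\,(\Xtr,\ytr))$. Hence all $d-k$ unseen instances receive the common prediction $\yh_0$, which may depend on the training labels but not on the identity of the unseen coordinate.

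With this constraint in hand I would lower bound the loss on the unseen examples. If $a$ and $b$ denote the numbers of unseen labels equal to $+1$ and $-1$ respectively (so $a+b=d-k$), then the unseen loss is $a(1-\yh_0)^2+b(1+\yh_0)^2\ge \sfrac{4ab}{(d-k)}$, minimized by predicting the remaining label average $\yh_0=\sfrac{(a-b)}{(a+b)}$. Dropping the nonnegative training loss and taking expectation over $\PPi$, it remains to evaluate $\E[ab]$. Writing $a=\half(d-k)-(P-\sfrac k2)$ and $b=\half(d-k)+(P-\sfrac k2)$, where $P$ is the number of $+1$ labels among the $k$ training examples, gives $ab=(\half(d-k))^2-(P-\sfrac k2)^2$, so $\E[ab]=\sfrac{(d-k)^2}{4}-\mathrm{Var}(P)$. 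Since $S$ is a uniform $k$-subset and $\h$ has exactly $\sfrac d2$ entries of each sign, $P$ is hypergeometric with $\mathrm{Var}(P)=\sfrac{k(d-k)}{(4(d-1))}$. Substituting yields expected unseen loss $\ge(d-k)-\sfrac{k}{(d-1)}$, and dividing by $d$ produces the average-loss bound $1-\sfrac{k}{(d-1)}$ after a short simplification.

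I expect the symmetry step to be the main obstacle to state cleanly: one must be careful that the transposition $\R$ genuinely fixes the \emph{entire} training set (which holds precisely because $m,m'$ are unseen) and that rotation invariance is invoked in the correct direction. The variance computation is routine, but it is exactly what sharpens the naive $1-\sfrac kd$ estimate (obtained by predicting $0$ on all unseen points) into the slightly weaker $1-\sfrac{k}{(d-1)}$; the gap reflects the extra information the permutation leaks through the total label count.
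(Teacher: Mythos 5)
Your proof is correct, and while it shares the overall skeleton with the paper's argument (rotate to a canonical problem, argue the algorithm cannot beat the label average on unseen points, finish with the hypergeometric variance), the middle of the argument is genuinely different. The paper rotates by the $\PPi$-dependent matrix $\sfrac{1}{\sqrt d}\,\PPi\H$, turning every permuted problem into $(\sqrt d\,\I,\PPi\h)$; the instance matrix then carries no information about $\PPi$ at all, and the bound follows from a purely Bayesian observation that, conditioned on the $k$ seen labels, every arrangement of the unseen labels is equally likely, so the optimal prediction is the conditional mean. You instead rotate by the fixed matrix $\sfrac{1}{\sqrt d}\,\H$, landing on $(\sqrt d\,\PPi,\PPi\h)$, where the label of $\sqrt d\,\e_m$ is the \emph{deterministic} value $h_m$ — so an unconstrained algorithm could simply read off the answer, and you must (and do) close this hole with a second invocation of rotation invariance: the transposition of two unseen coordinates fixes the entire training set, forcing a common prediction $\yh_0$ on all unseen instances. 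This buys you a per-realization lower bound $4ab/(d-k)$ that holds for every $\PPi$, with expectation entering only through $\E[ab]$, whereas the paper's bound is inherently a statement about conditional expectations; the price is the extra symmetry lemma, which the paper's choice of rotation renders unnecessary. Your variance computation (via the seen count $P$ rather than the unseen count $q$) is equivalent to the paper's Appendix \ref{app:permutation_theorem} and yields the same $d-\frac{kd}{d-1}$ total, hence the same $1-\frac{k}{d-1}$ average.
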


   \begin{proof}
    Define a rotation matrix $\U$ as $\sfrac{1}{\sqrt{d}}\,\PPi \H$.
    The predictions of any rotation invariant algorithm on
    $(\PPi \H, \PPi \h)$ and $(\PPi \H \U^\top\!, \PPi \h) =
    (\sqrt{d}\I,\PPi \h)$ are the same.
    The algorithm receives the first $k$ examples of each problem $(\sqrt{d} \I, \PPi \h)$
    as training examples. Since the problems are permuted uniformly,
    every permutation of the unseen $d-k$ labels is equally likely.
    So the best prediction on the $d-k$ unseen labels is
       the average count of the unseen labels (which the algorithm
    can deduce from the count of the seen labels and the
       fact that $\h$ has an equal number of $\pm 1$ labels.)
       If we denote the number of unseen $+1/\!-1$ labels as $\ELL$
    and $(d-k)-\ELL$, respectively, then the best prediction on all unseen
    labels is $\yh = \frac{\ELL - (d-k-\ELL)}{d-k}$.
    The total square loss on all unseen examples is thus at least:
    \[
      \ELL (1-\yh)^2 + (d-k-\ELL) (-1 - \yh)^2
       = \frac 4{d-k} \;\ELL(d-k-\ELL).
    \]
    In Appendix \ref{app:permutation_theorem} we show that $\ELL$ follows a hyper geometric
    distribution, by which we can compute the expected total loss as
    \[
    \frac{4}{d-k} \;\mathbb{E}[\ELL(d-k-\ELL)] = d - \frac{kd}{d-1}.
    \]
    Lower bounding the loss of the algorithm on seen examples by $0$, and dividing by $d$,
    the expected average loss over all $d-1$ examples is at least $1-\frac{k}{d-1}$.
\end{proof}

\subsection{Lower bound for random Gaussian inputs}

The lower bounds in the previous section result from feeding a rotation invariant
algorithm examples which are all orthogonal to each other. These type of problems
might seem somewhat specialized and therefore in this section we show that
rotation invariant algorithms are also unable to
efficiently learn a simple class of problems in which
the entries of $\X$ follow an i.i.d. standard Gaussian distribution,
while the labels are generated noise-free by a sparse weight vector.
On the other hand, this class of simple problems is perfectly learnable
from just a single training example
by a straightforward (non rotation invariant) algorithm.

We thus consider the problem matrix $(\X, \y)$,
where the rows of $\X$ (instances) are generated
i.i.d. from $N(\boldsymbol{0}, \I)$,
while $\y = \X \w^{\star}$
for some $\w^{\star}$ with $\|\w^{\star}\|=1$. Note that the setup is noise-free as
$\w^{\star}$ has zero loss on any such $(\X, \y)$.
The standard Gaussian distribution is rotation invariant: if
$\x \sim N(\boldsymbol{0}, \I)$ then $\U \x \sim N(\boldsymbol{0}, \I)$
for any orthogonal matrix $\U$. In fact, the whole analysis in this section
would equally well apply to \emph{any} rotation invariant distribution with covariance
matrix $\I$, but we keep the Gaussian distribution for the sake of simplicity.

\begin{theorem}
\label{thm:spherically_symmetric}
       For any rotation invariant algorithm receiving the first
       $k$ examples of the problem
$(\X, \y \!=\!\! \X\! \w^{\star})$ for any $\|\w^{\star}\|=1$ with the $d$ rows of
$\X$ being generated i.i.d. from $N(\boldsymbol{0}, \I)$,
       the expectation w.r.t. $\X$ of the average square loss on all $d$ examples
       is at least $(1 - \sfrac kd)^2$.
\end{theorem}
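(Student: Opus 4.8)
The plan is to port the ``the unseen part of the target is unlearnable'' idea behind Theorems~\ref{t:flippm}--\ref{t:permute}, with the role of the ``unseen coordinates'' now played by the orthogonal complement of the training span. Since the loss is square loss, I first derandomize the algorithm as in the preamble, so that $\yh(\cdot\,|\,(\Xtr,\ytr))$ may be taken to be a deterministic function. I then fix the training inputs $\x_1,\dots,\x_k$ (the first $k$ rows of $\X$), set $V=\mathrm{span}(\x_1,\dots,\x_k)$ (a $k$-dimensional subspace almost surely), and decompose the unit target as $\w^{\star}=\P_V\w^{\star}+\P_{V^\perp}\w^{\star}$. Each training label obeys $y_i=\x_i^\top\w^{\star}=\x_i^\top\P_V\w^{\star}$ because $\x_i\in V$, so the training labels reveal nothing about the perpendicular component $\P_{V^\perp}\w^{\star}$.

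The key step is a symmetry argument. For any orthogonal $\U$ fixing $V$ pointwise (i.e.\ $\U\in O(V^\perp)$ extended by the identity on $V$), the training rows are unchanged, $\Xtr\U^\top=\Xtr$; hence the rotation invariance \eqref{eq:def_rot_inv} gives $\yh(\U\x\,|\,(\Xtr,\ytr))=\yh(\x\,|\,(\Xtr,\ytr))$ for every test input $\x$. Thus, conditioned on the training data, the prediction on a test point $\x_j$ $(j>k)$ depends only on the $O(V^\perp)$-invariants $\P_V\x_j$ and $r_j:=\|\P_{V^\perp}\x_j\|$, and \emph{not} on the direction $\u_j:=\P_{V^\perp}\x_j/r_j$. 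Because $\x_j\sim N(\zero,\I)$ is independent of the training inputs, conditioned on the training data, $\P_V\x_j$, and $r_j$, the direction $\u_j$ is uniform on the unit sphere of $V^\perp$ and independent of $r_j$.

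Next I would compute the expected loss on one test example. Writing $y_j=(\P_V\x_j)^\top\w^{\star}+r_j\,\u_j^\top\P_{V^\perp}\w^{\star}$ and noting that the prediction $\yh_j$ and the first summand are constants given the conditioning, expanding $(y_j-\yh_j)^2$ and averaging over $\u_j$ kills the cross term (as $\E[\u_j]=\zero$) and leaves at least $r_j^2\,\E[(\u_j^\top\P_{V^\perp}\w^{\star})^2]=r_j^2\,\|\P_{V^\perp}\w^{\star}\|^2/(d-k)$, using $\E[\u_j\u_j^\top]=\frac{1}{d-k}\I$ on the $(d-k)$-dimensional $V^\perp$. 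Taking expectation over $r_j$ with $\E[r_j^2]=\E\|\P_{V^\perp}\x_j\|^2=d-k$ cancels the denominator and gives a per-test-example loss of at least $\E\|\P_{V^\perp}\w^{\star}\|^2$. Finally, since the span of $k$ i.i.d.\ standard Gaussians is a uniformly random $k$-dimensional subspace, $\E\|\P_V\w^{\star}\|^2=\sfrac kd$, so $\E\|\P_{V^\perp}\w^{\star}\|^2=1-\sfrac kd$. Summing over the $d-k$ test examples (bounding the loss on the $k$ training examples by $0$) and dividing by $d$ yields average loss at least $\frac{d-k}{d}\,(1-\sfrac kd)=(1-\sfrac kd)^2$.

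The main obstacle I expect is the symmetry step: one must argue cleanly that rotation invariance forces the prediction to ignore the direction of the test point inside $V^\perp$, and that this is exactly the direction carrying all information about $\P_{V^\perp}\w^{\star}$. The remainder is the bookkeeping of three nested expectations --- over $\u_j$, over $r_j$, and over the random training span --- each a standard isotropy computation; the only care needed is to verify the independence structure that lets them be evaluated one at a time.
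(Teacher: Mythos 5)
Your proposal is correct, and it reaches the bound by a genuinely different mechanism than the paper's proof in Appendix~\ref{app:Gaussians_proof}. The paper invokes rotation invariance exactly once, globally: it argues that a rotation invariant algorithm has the same expected loss for every unit $\w^{\star}$, so it suffices to lower bound the loss averaged over $\w^{\star}$ drawn uniformly from the sphere; the remainder of its argument applies to \emph{any} algorithm and runs through the decomposition $\w^{\star}=\w_{\|}+\w_{\perp}$ relative to the training span, a reflection symmetry showing $\w_{\|}+\w_{\perp}$ and $\w_{\|}-\w_{\perp}$ are equidistributed, and the resulting identity $\mathbb{E}[\yh_i y_i]=\mathbb{E}[\yh_i\,\x_{\|}^\top\w^{\star}]$, which yields $\mathbb{E}[(y_i-\yh_i)^2]\ge 1-\mathbb{E}[(\x_{\|}^\top\w^{\star})^2]\ge 1-\sfrac kd$. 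You instead keep $\w^{\star}$ fixed and use rotation invariance in a sharper, localized form: the stabilizer subgroup of rotations fixing $\mathrm{span}(\Xtr)$ pointwise leaves the training sample unchanged, so the prediction cannot depend on the direction of the test point inside $V^\perp$ --- which is precisely the coordinate carrying all information about $\P_{V^\perp}\w^{\star}$ --- and you then average over that uniformly distributed direction. Both routes ultimately rest on the same isotropy facts ($\mathbb{E}[\u\u^\top]=\frac{1}{d-k}\I$ on $V^\perp$, $\mathbb{E}[r_j^2]=d-k$, $\mathbb{E}[\P_V]=\frac kd\I$), and your bookkeeping of the nested expectations is sound. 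What each buys: the paper's detour through a random target gives the slightly stronger intermediate statement that \emph{every} algorithm fails on the randomized problem (mirroring the structure of Theorems~\ref{t:flippm}--\ref{t:permute}), whereas your argument works directly for a fixed sparse target such as $\w^{\star}=\e_i$ without any averaging over targets, and it isolates more transparently why rotation invariance is the operative obstruction.
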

This lower bound $(1 - \sfrac kd)^2=1-2\,\sfrac kd+(\sfrac kd)^2$
is slightly weaker than the ones for orthogonal instances.
Yet the algorithm has expected loss at least $\sfrac 14$ after seeing half of the examples.

\begin{proof}(sketch, full proof in Appendix \ref{app:Gaussians_proof})
We first show that a rotation invariant algorithm has the same expected average
loss for any $\w^{\star}$ with $\|\w^{\star}\|=1$, as both the data
and $\w^{\star}$ can be rotated without changing the data distribution or
the predictions of the algorithm. Then we consider a problem
in which $\w^{\star}$ itself is drawn uniformly from a unit sphere,
and show that \emph{every} algorithm (not necessarily rotation invariant) has
expected average loss at least $(1 - \sfrac kd)^2$ on all
    examples where the expectation is
taken with respect to a random choice of $\w^{\star}$ and $\X$.
It then follows from the first argument that a rotation invariant algorithm
has expected average loss at least $(1 - \sfrac kd)^2$ for \emph{any} choice of $\w^{\star}$.
\end{proof}
Since the loss of a rotation invariant algorithm does not depend on the choice of the
target weight vector $\w^{\star}$,
one can choose any fixed sparse vector $\w^{\star} = \e_i$ for
$i \in \{1,\ldots,d\}$ as the target.
In each case the algorithm still incurs expected loss at
least $(1-\sfrac{k}{d})^2$ on average after seeing $k$ examples.
On the other hand for a Gaussian input matrix, sparse weight vectors are
easy to learn from a \emph{single} example.
This is because for $\w^{\star} = \e_i$, the label
coincides with the $i$-th input feature and
each entry of every row of $\X$ is unique with probability one.

We note that the lower bound of Theorem \ref{thm:spherically_symmetric} is tight and
matched by the (rotation invariant) least squares algorithm (proof in Appendix
\ref{app:LS}).
\begin{theorem}
\label{thm:least_squares_optimal}
Consider the problem $(\X, \y = \X \w^{\star})$ for any
    $\|\w^{\star}\|=1$ where the columns of
$\X$ generated i.i.d. from $N(\boldsymbol{0}, \I)$.
The least squares algorithm, which upon seeing $k$ examples, predicts
with the weight vector
$\w = \X_{1:k,:}^{\dagger} \y_{1:k}$,
where $\A^{\dagger}$ denotes the pseudo-inverse of $\A$,
has expected average loss $(1 - \sfrac kd)^2$ on all $d$ examples,
where the expectation is w.r.t. the random choice of $\X$.
\end{theorem}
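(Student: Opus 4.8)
The plan is to recognize that, for Gaussian inputs, the least squares predictor after $k$ examples is exactly the orthogonal projection of the target weight vector onto a uniformly random $k$-dimensional subspace, and then to evaluate the resulting error in closed form. First I would observe that since $\y_{1:k} = \X_{1:k,:}\,\w^{\star}$, the training labels are realizable, and because $\X_{1:k,:}$ is a $k\times d$ Gaussian matrix with $k\le d$ it has full row rank almost surely. The minimum-norm solution returned by the pseudo-inverse, $\w = \X_{1:k,:}^{\dagger}\y_{1:k}$, therefore satisfies $\w = \P_V\,\w^{\star}$, where $\P_V$ is the orthogonal projector onto the row space $V$ of $\X_{1:k,:}$; indeed $\w$ lies in $V$ and $\X_{1:k,:}(\w^{\star}-\w)=\X_{1:k,:}(\I-\P_V)\w^{\star}=\zero$. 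By the rotation invariance of the standard Gaussian, the rows of $\X_{1:k,:}$ span a uniformly random $k$-dimensional subspace, so $V$ is Haar-distributed over the Grassmannian.

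Next I would decompose the per-example loss. Writing $\r = (\I-\P_V)\w^{\star}$ for the residual component of $\w^{\star}$ orthogonal to $V$, the loss on example $j$ is $(\x_j^\top\w^{\star}-\x_j^\top\w)^2 = (\x_j^\top\r)^2$. For the $k$ training examples, $\x_j\in V$ and $\r\in V^\perp$, so these losses vanish, consistent with least squares interpolating the training set. For each of the $d-k$ test examples ($j>k$), the instance $\x_j\sim N(\zero,\I)$ is independent of the first $k$ rows and hence of $V$ and $\r$; conditioning on $\r$ gives $\x_j^\top\r\sim N(0,\|\r\|^2)$, so $\E[(\x_j^\top\r)^2\mid\r]=\|\r\|^2$.

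It remains to compute $\E\|\r\|^2=\E\|(\I-\P_V)\w^{\star}\|^2$. Here I would use that $\E[\P_V]$ commutes with every orthogonal $\U$: since $\U\P_V\U^\top$ projects onto $\U V$, which is again Haar-distributed, $\E[\P_V]=\U\,\E[\P_V]\,\U^\top$ for all $\U$, forcing $\E[\P_V]$ to be a scalar multiple of $\I$; taking traces pins the scalar to $\tr(\P_V)/d = \sfrac kd$. Thus $\E\|\r\|^2 = \w^{\star\top}\E[\I-\P_V]\w^{\star} = 1-\sfrac kd = \sfrac{(d-k)}d$. Summing the expected test losses together with the zero training losses, the expected total loss is $(d-k)\cdot\sfrac{(d-k)}d$, and dividing by $d$ gives the claimed average loss $\big(1-\sfrac kd\big)^2$.

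The only substantive ingredient is the projection identity $\E\|(\I-\P_V)\w^{\star}\|^2=\sfrac{(d-k)}d$; everything else is bookkeeping. The points that require care are the almost-sure full-rank claim (so that the pseudo-inverse really yields the projector $\P_V$) and the independence of each test instance from $V$, which is precisely what decouples the expectation over $\x_j$ from the expectation over the random subspace. I expect no genuine obstacle beyond stating these two facts cleanly.
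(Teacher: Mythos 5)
Your proposal is correct and follows essentially the same route as the paper's proof: identify $\X_{1:k,:}^{\dagger}\X_{1:k,:}$ as the orthogonal projector onto the row space, note the training loss vanishes, reduce the expected test loss to $\w^{\star\top}(\I-\P_V)\w^{\star}$ via $\E[\x_i\x_i^\top]=\I$, and pin down $\E[\P_V]=\sfrac kd\,\I$ by rotational symmetry plus a trace/rank computation. The only cosmetic difference is the order of expectations (you condition on the residual $\r$ first, the paper integrates out $\x_i$ first), which changes nothing.
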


\section{Shortcomings of single target lower bound technique
of Theorem \ref{t:flippm}}

    \begin{enumerate}
\item
    The rotation invariant initialization of the fully connected
	    input nodes is necessary:
    If all input nodes were initialized to
    $\e_1$, then the network would learn with zero examples because
    $$ \underbrace{s_i \h_i}_{\kern-5mm \text{sign flipped $i$-th instance} \kern-5mm}
	    \cdot \;\;\;\underbrace{\e_1}_{\w_0} =s_i.$$
\item
    The lower bound is also broken if the instances
    are \emph{embedded} by an arbitrary feature map
    $\mathbb{R}^d \ni \x \mapsto \phi(\x) \in \mathbb{R}^m$,
    as allowed by kernel methods. For example, all rows of $\H$ can be embedded as
    the same row, $\phi(\h_i)=\e_1$.
    Now after the fully connected input nodes all receive the sign flipped
    first instance $s_1 \e_1$ as their input,
    we are essentially in the target initialized case.
\item
    Most of all the lower bounds shown so far mostly require the instances to be
    orthogonal. We already showed that random Gaussian features
	    lead to a slightly weaker lower bound for
	    sparse targets (Theorem \ref{thm:spherically_symmetric}). However
	    ideally we want lower bounds when the features
	    are random $\pm 1$.
    \end{enumerate}
    In the next section we show that in a special case,
    the initialization $\w_0=\zero$ can be shown to be
    optimal. We then expand the SVD based lower bounding technique of \cite{span}
    which covers the case of arbitrary initialization and random $\pm 1 $ features.
    However so far, all of this only works for linear neural networks
    with up to two fully connected layers.
\section{Zero initial weight vector is optimal for a single linear neuron}

\begin{theorem}
    Consider learning the problem
    $(\H,[\one,-\one])$ on a single linear neuron with any
    initialization $\w_0$. Assume the algorithm updates
    with gradient descent based on the first $k$ instances.
    Then the average square loss over all $d$ instances and
    both targets is at least $1-\sfrac kd$ and is minimized
    when $\w_{0,k+1:d}=\zero$.
\end{theorem}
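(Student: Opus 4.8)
The plan is to exploit the fact that gradient descent on the square loss for a single linear neuron can only move the weight vector within the span of the instances it has actually seen, and to combine this with the orthogonality of the Hadamard rows. Writing $\h_1,\dots,\h_d$ for the rows of $\H$, the gradient of $(y-\h_j^\top\w)^2$ with respect to $\w$ is a scalar multiple of $\h_j$, so after any number of gradient steps on the first $k$ instances the learned weight satisfies $\w=\w_0+\sum_{j=1}^{k} c_j\,\h_j$ for some reals $c_1,\dots,c_k$; that is, $\w-\w_0\in\mathrm{span}(\h_1,\dots,\h_k)$, independently of the learning rates, batch sizes, or number of passes. This holds separately for the two targets, producing weights $\w^+$ (for $\one$) and $\w^-$ (for $-\one$), but crucially both share the same initialization $\w_0$.

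First I would compute the predictions on the unseen instances $\h_i$ with $i>k$. Since the rows of $\H$ are orthogonal, $\h_i^\top\h_j=0$ for $j\le k<i$, so $\h_i^\top\w^{\pm}=\h_i^\top\w_0$: the prediction on every unseen instance equals $\h_i^\top\w_0$ and is identical for both targets, no matter what gradient descent did. The two-target device now does the work. On unseen instance $i$ the combined loss over the labels $+1$ and $-1$ is $(1-\h_i^\top\w_0)^2+(1+\h_i^\top\w_0)^2=2+2(\h_i^\top\w_0)^2\ge 2$, with equality exactly when $\h_i^\top\w_0=0$. Summing over the $d-k$ unseen instances, lower bounding the loss on the seen instances by $0$, and dividing by the $2d$ instance/target pairs gives average loss at least $\sfrac{2(d-k)}{2d}=1-\sfrac kd$.

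For the ``minimized'' claim I would argue that the unseen components of $\w_0$ enter the total loss only through the nonnegative term $\sum_{i>k}2(\h_i^\top\w_0)^2$. The predictions on the seen instances $\h_j$ ($j\le k$)---and hence the entire gradient descent trajectory---depend on $\w_0$ only through its projection onto $\mathrm{span}(\h_1,\dots,\h_k)$, because at every step the relevant inner products $\h_j^\top\w$ annihilate the orthogonal part of $\w_0$. Consequently, holding the seen-span component fixed, the total loss is minimized precisely when the orthogonal component vanishes, i.e.\ when $\h_i^\top\w_0=0$ for all $i>k$. In the rotated coordinates given by the orthogonal matrix $\U=\sfrac1{\sqrt d}\H$ (which sends the instance matrix $\H$ to $\sqrt d\,\I$ and each row to $\sqrt d\,\e_i$, and whose $i$-th coordinate of $\U\w_0$ is $\sfrac{1}{\sqrt d}\h_i^\top\w_0$), this condition reads exactly $\w_{0,k+1:d}=\zero$.

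The one step requiring care is the minimization direction: I must verify that the seen-instance loss is genuinely independent of the $k+1{:}d$ components of $\w_0$, which follows from the orthogonality argument above but should be checked against whichever gradient descent variant is assumed. Everything else is routine given the span structure; the conceptual content is that averaging over the complementary targets $\one$ and $-\one$ forces the (necessarily identical) predictions on unseen instances to be simultaneously bad for $+1$ and $-1$, playing the deterministic role that random sign flipping played in Theorem~\ref{t:flippm}.
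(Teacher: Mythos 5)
Your proposal is correct and follows essentially the same argument as the paper: gradient descent confines $\w-\w_0$ to the span of the $k$ seen instances, orthogonality forces the prediction on each unseen instance to be $\h_i^\top\w_0$ for both targets, and pairing the labels $\pm1$ yields combined loss at least $2$ per unseen instance, minimized when that inner product vanishes. The paper merely performs the rotation to $(\sqrt{d}\,\I,[\one,-\one])$ at the outset rather than at the end, which is a cosmetic difference; your added verification that the seen-instance losses are independent of $\w_{0,k+1:d}$ is a welcome bit of extra care.
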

\begin{proof}
   Since the algorithm is rotation invariant%
    \footnote{To be more precise,
   the rotation invariance is broken by an arbitrary intialization $\w_0$,
   but the algorithm's prediction are the same after rotating the data, if
   we also rotate $\w_0$. Since $\w_0$ is arbitrary, we can as well consider
    its rotated version as the initialization.}, it is
    convenient to switch to proving the lower bound for problem matrix
    $(\sqrt{d}\,\I,[\one,-\one])\text{  and any }\w_0$.
    For both targets, the algorithms weight vector is $\w_0$ plus a linear
    combination of the first $k$ instances $\sqrt{d}\e_i$, i.e.
    $\w=\w_0+[\underset{1, k}{\b},\underset{1, d-k}{\zero}]^\top$,
    where $\b$ depends on the target.
    For any unseen instance $\sqrt{d}\e_j$ for $j>i$,
    the linear neuron predicts $\sqrt{d}\e_j^\top \w= \sqrt{d} w_{0,j}$
    on both targets when the label is $\e_j^\top (\pm \one) =\pm 1$.
    So the average loss is at least
    \begin{align*}
	&\sfrac1{2d}\sum_{j=k+1}^d
	\underbrace{\Big((\sqrt{d}w_{0,i}-1)^2+ (\sqrt{d}w_{0,i}+1)^2\Big)}
	 _{\text{minimized when $w_{0,j}=0$}}
	\;\ge\; \frac1{2d} \;2(d-k) \;=\;1-\frac{k}{d}.
    \end{align*}
    This proves that the initialization $w_{0,j}=0$ is optimal for all $j>k$.
    \end{proof}
    This type of lower bounds does not work for a single target (say $\one$)
    because $\w_0$ could be set to generate this target and no loss
    would be incurred. By additionally permuting the rows
    of $\H$, we can force all components of $\w_0$ to 0.

\section{SVD based lower bounds for linear neural nets}
\label{s:sing}
An alternative second technique for deriving lower bounds
for learning the Hadamard problem using GD uses the
SVD spectrum of the design matrix \citep{span}.%
\footnote{See \cite{pritish} for a more recent paper based
on similar lower bound techniques.}
The main idea is to lower bound the loss on all
examples averaged over all targets based on the
tail sum of the squared singular values of a matrix $\Y$
containing the label vectors of all targets as columns.
The instances are the transposed rows of $\underset{n, d}{\Y}$,
transformed by an arbitrary feature map $\phi\!:\!\RR^d \!\!\rightarrow\! \RR^m.
\!$
We first present this technique in its simplest form.

\begin{theorem}[\citep{span}]
    \label{thm:span}
For any $\phi$ map, the average square loss for learning
    the problem $(\underset{n,m}{\phi(\Y)},\underset{n,d}{\Y})$ after seeing any $0 \leq k \leq \rank(\Y)$
    training instances and using GD on a single linear neuron with zero initialization
    is lower bounded by
  $\frac1{\Vert\Y\Vert_{\text{F}}^2} \sum_{i=k+1}^{\rank(\Y)} s_i^2,$
  where $s_i$ is the $i$-th singular value of $\Y$ in the decreasing order.
\end{theorem}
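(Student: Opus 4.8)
The plan is to convert the gradient-descent dynamics into a rank constraint on the matrix of predictions and then invoke the Eckart--Young--Mirsky theorem. First I would record the standard span invariant of gradient descent on a single linear neuron with square loss and zero initialization: on a training example with transformed input $\phi(\x)\in\RR^m$ and label $y$, the prediction is $\phi(\x)^\top\w$ and the loss gradient $-2\big(y-\phi(\x)^\top\w\big)\,\phi(\x)$ is a scalar multiple of $\phi(\x)$. Hence every update adds a multiple of some training input, and since the neuron starts at $\w=\zero$ and is trained only on the first $k$ examples, its iterates never leave $R:=\mathrm{span}\{\phi(\Y)_{1,:},\dots,\phi(\Y)_{k,:}\}$, the row span of $\phi(\Y)_{1:k,:}$, a subspace of $\RR^m$ of dimension at most $k$. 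This reasoning is oblivious to the map $\phi$ --- only the span of the transformed training rows matters --- which is exactly why the resulting bound will hold for every feature map, i.e. every kernel.

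Next I would run this argument in parallel over the $d$ targets, the columns of $\Y$. For target $i$ the neuron converges to some $\w_i\in R$, and collecting these into the $m\times d$ matrix $\W=[\w_1,\dots,\w_d]$ we get $\rank(\W)\le\dim R\le k$, since every column of $\W$ lies in $R$. The matrix of all predictions on the $n$ examples and the $d$ targets is $\widehat{\Y}=\phi(\Y)\,\W$, and therefore $\rank(\widehat{\Y})\le\rank(\W)\le k$.

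Finally, the total square loss summed over the $n$ examples and the $d$ targets is exactly $\Vert\Y-\widehat{\Y}\Vert_{\text{F}}^2$. Since $\widehat{\Y}$ is some matrix of rank at most $k$, lower bounding by the minimum over all such matrices and applying Eckart--Young--Mirsky gives
\[
\Vert\Y-\widehat{\Y}\Vert_{\text{F}}^2 \;\ge\; \min_{\rank(\M)\le k}\Vert\Y-\M\Vert_{\text{F}}^2 \;=\; \sum_{i=k+1}^{\rank(\Y)} s_i^2 .
\]
Dividing by $\Vert\Y\Vert_{\text{F}}^2=\sum_i s_i^2$ to form the normalized average square loss yields the stated bound; as a sanity check, for the $d\times d$ Hadamard matrix all $s_i^2=d$, so this recovers $1-\sfrac kd$, consistent with Theorem~\ref{t:flippm}.

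I expect the only real content to lie in the first two paragraphs: once the predictions are pinned down to a rank-$\le k$ matrix, Eckart--Young finishes the argument mechanically. The one point that needs care is that $\widehat{\Y}$ does not range over \emph{all} rank-$k$ matrices --- its column space is forced into the fixed subspace $\phi(\Y)R$ determined by $R$ --- but since we only lower bound the loss by the best \emph{unconstrained} rank-$k$ approximation, this restriction is harmless. The other thing to get right is the bookkeeping of the normalization, so that the per-example, per-target average is correctly identified with the relative Frobenius loss $\Vert\Y-\widehat{\Y}\Vert_{\text{F}}^2/\Vert\Y\Vert_{\text{F}}^2$.
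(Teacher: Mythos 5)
Your proposal is correct and follows essentially the same route as the paper: the weight vector for each target is a linear combination of the $k$ transformed training rows (the paper writes this as $\w = \phi(\widehat{\Y})^\top \a_j$), so the full prediction matrix $\phi(\Y)\phi(\widehat{\Y})^\top \A$ has rank at most $k$, and Eckart--Young--Mirsky gives the tail sum of squared singular values. The only difference is presentational --- you assemble the weight vectors into an explicit matrix $\W$ and name the rank constraint before invoking the approximation bound, whereas the paper does this in a single displayed line.
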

\begin{proof}
    For a fixed target column $\y_i$ of $\Y$ and after seeing $k$ arbitrary rows
    $\underset{k, m}{\phi(\widehat{\Y})}$,
    the weight vector of the GD algorithm can be written as
    a linear combination of these rows, i.e. $\w =
    \phi(\widehat{\Y})^\top \a_j$ for some $\a_j \in \RR^k$. We can now write the average square loss as
\[ \frac1{\Vert\Y\Vert_{\text{F}}^2}\sum_{j\in [d]}
    \Vert \phi(\Y) \phi(\widehat{\Y})^\top \a_j - \y_j\Vert^2
    = \frac1{\Vert\Y\Vert_{\text{F}}^2}\,
    \Vert \overbrace{\phi(\Y)
    \underbrace{\phi(\widehat{\Y})^\top\!}_{m,k}
    \underset{k, d}{\A}}^{\text{rank-}k} - \Y\Vert_{\text{F}}^2
    \geq \frac1{\Vert\Y\Vert_{\text{F}}^2} \sum_{i=k+1}^{\rank(\Y)} s_i^2\, ,
\]
where $\A$ is the matrix with $j$-th column equal to $\a_j$.
\end{proof}
Notice that the above relies on the fact that for any rank $0 \leq k \leq \rank(\Y)$, the square loss of any rank $k$ approximation
of matrix $\Y$ is lower bounded by
the sum of the smallest $\rank(\Y)-k$ squared singular values
of $\Y$. As a result, the bound is effective only when $\Y$
has a flat spectrum. The original bound of
Theorem~\ref{thm:span} was proposed for learning the
$d$-dimensional Hadamard matrix $\Y=\H$ with $n=d$. In this
case $\Vert\H\Vert_{\text{F}}^2=d^2$, $\rank(\H)=d$, and
all $d$ singular values $s_i^2$ are equal to $d$. Thus the
lower bound reduces to $\frac1{d^2} (d-k) d = 1 - \frac{k}{d}$,
for $0 \le k \le  d$.
When $\Y$ contains random $\pm 1$ features, then the
tail of a square spectrum is tightly concentrated and the
lower bound becomes $1\! -\! \frac{ck}{d}$ where $c\!>\!0$
(Appendix~\ref{a:randpm}).

We next observe that with any (fixed) initialization
$\w_0$ of the single neuron, the rank of
weight matrix goes up by one and the lower bound becomes
$\frac1{\Vert\Y\Vert_F^2} \sum_{i=k+1+1}^{\rank(\Y)} s_i^2$.
We show this for a doubled version of the Hadamard problem, with
targets $\pm \bm{h}_j$ for $j \in [n]$, because this helps with
the proof of the next theorem.
\begin{theorem}
  \label{thm:span-hadamrd}
  For any $\phi$ map, the average square loss for learning
    the problem $(\phi(\H),[\H,-\H])$ after seeing any $0 \leq k \leq d$
    training instances and using GD on a single linear
    neuron with any initialization
    is lower bounded by $1 - \frac{k+1}{d}$.
\end{theorem}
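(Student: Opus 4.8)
The plan is to mirror the proof of Theorem~\ref{thm:span}, but to carefully track the one extra rank contributed by the nonzero initialization. Fix any target column $\y_j$ of $\Y=[\H,-\H]$ (so $j$ ranges over $[2d]$), and let $\phi(\Hh)$ denote the $k\times m$ matrix of the seen transformed rows. Since the gradient of the square loss of a linear neuron at an example is a scalar multiple of that example's transformed input $\phi(\h_i)$, every gradient-descent step adds a multiple of one of the $k$ seen rows to the weight vector. Hence, independently of step sizes or the number of passes, the weight vector learned for target $j$ has the form $\w_j=\w_0+\phi(\Hh)^\top\a_j$ for some $\a_j\in\RR^k$, where $\w_0$ is the (arbitrary but fixed) initialization.

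Next I would stack the predictions on all $d$ instances across all $2d$ targets into a single $d\times 2d$ matrix and exhibit its low rank:
\[
  \phi(\H)\,[\w_1,\ldots,\w_{2d}]
  \;=\; \underbrace{\phi(\H)\,\w_0\,\one^\top}_{\rank\,\le\,1}
  \;+\; \underbrace{\phi(\H)\,\phi(\Hh)^\top\A}_{\rank\,\le\,k},
\]
with $\A=[\a_1,\ldots,\a_{2d}]$ of size $k\times 2d$. The first summand is an outer product and the second factors through a $k$-dimensional space, so the whole prediction matrix has rank at most $k+1$. This is the single essential difference from the zero-initialization case of Theorem~\ref{thm:span}, and it is the step I expect to matter most: because $\w_0$ is \emph{shared} across all targets, it contributes exactly one extra rank rather than one per target.

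By the Eckart--Young theorem, any rank-$(k+1)$ approximation of $\Y$ leaves Frobenius residual at least $\sum_{i=k+2}^{\rank(\Y)}s_i^2$, so the average square loss $\frac1{\Vert\Y\Vert_{\text F}^2}\Vert\phi(\H)[\w_1,\ldots,\w_{2d}]-\Y\Vert_{\text F}^2$ is bounded below by $\frac1{\Vert\Y\Vert_{\text F}^2}\sum_{i=k+2}^{\rank(\Y)}s_i^2$. It then remains to compute the spectrum of $\Y=[\H,-\H]$, where the doubling turns out to be harmless: $\Y\Y^\top=\H\H^\top+\H\H^\top=2d\,\I$ by orthogonality of the Hadamard rows, so $\Y$ has rank $d$ with all $d$ singular values satisfying $s_i^2=2d$, and $\Vert\Y\Vert_{\text F}^2=2d^2$. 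Substituting gives $\frac1{2d^2}(d-k-1)\,(2d)=1-\frac{k+1}{d}$, as claimed. The only places requiring care are the rank accounting in the second paragraph and this spectral computation; once both are in place the bound follows immediately, with no appeal to rotation invariance.
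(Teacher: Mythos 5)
Your proposal is correct and follows essentially the same route as the paper: the weight vector for each target is $\w_0$ plus a span of the $k$ seen transformed rows, the shared initialization contributes exactly one extra rank so the stacked prediction matrix $\phi(\H)\bigl(\phi(\widehat{\H})^\top\A+\w_0\one^\top\bigr)$ has rank at most $k+1$, and the Eckart--Young bound combined with the flat spectrum $s_i^2=2d$ of $[\H,-\H]$ yields $1-\frac{k+1}{d}$. The only difference is presentational: you spell out the rank accounting and the computation $\Y\Y^\top=2d\,\I$ explicitly, whereas the paper compresses these into a single display.
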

\begin{proof}
  Similar to the proof of Theorem~\ref{thm:span}, we have
\begin{align*}
    \frac{1}{2d^2}&\Vert\!\overbrace{\phi(\H)(\underset{m,k}{\phi(\widehat{\H})^\top}\!\! \A
    +\underset{m,1}{\w_0}\,\underset{1,2d}{\one^\top})}^{\text{rank $k+1$}}
    -[\H,\!-\H]\Vert_{\text{F}}^2
     =\frac1{2d^2}\!\!\!\sum_{i=k+2}^d \!\!\!s_i^2
  =\frac{(d\!-\!(k\!+\!1))(2d)}{2d^2} =1\!-\!\frac{k\!+\!1}{d}\, .
 \vspace{-5mm}
\end{align*}

\vspace{-10.5mm}\end{proof}
The above lower bound is not directly comparable with the
available upper bounds for the
spindly network which require the input features and labels to be in a the range $[0,X]$
and $[0,Y]$, respectively, for some constants $X,Y > 0$.
However we can shift the $\pm 1$ features of $\Y$ to $0/1$ as was done for
Theorem \ref{t:flip01}. The spectrum essentially remains flat.

Essentially we need to compute the SVD spectrum of
$\underset{d, 2d}{\M}\coloneqq([\H,-\H]\, +
\underset{d,2d}{\one})/2$. Matrix $\M$ has rank at least $d$, thus we have $\rank(\M)=d$ and all rows of $\M$ have equal number ($d$ many) $1$'s and $0$'s. Thus $\Vert\M\Vert_{\text{F}}^2=d^2$. Note that we have
\begin{align*}
    \M\,\M^\top&=([\H,-\H]+\underset{d,2d}{\one})/2 \quad
    ([\H,-\H] +\underset{d,2d}{\one})^\top/2 =
    (2d\,\underset{d,d}{\I} + 2\, \underset{d,d}{\zero}
    +2d \,\underset{d,d}{\one})/4
    = \frac d2 (\underset{d,d}{\I}+\underset{d,d}{\one}).
\end{align*}
Recall that the $s_i^2$ of $\M$ are the eigenvalues of
$\M\,\M^\top$ sorted in decreasing order. The following
shows the eigensystem of $\M\,\M^\top$ is $\frac{\H}{\sqrt{d}}$ and also gives its eigenvalues:
\begin{align*}
\frac{\H^\top}{\sqrt{d}} \;
    \frac d2 (\underset{d,d}{\I}+\underset{d,d}{\one}) \;
\frac{\H}{\sqrt{d}}
    &= \H^\top(\underset{d,d}{\I}+\underset{d,d}{\one})\H / 2
    =(\H^\top+d\;[\one;\underset{d-1,1}{\zero}]^\top) \; \H/2\\
    &=\frac d2 \,\underset{d,d}{\I}+ \frac{d^2}2 \diag([1,\underbrace{0,\ldots,0}_{d-1}])
=\diag([\frac{d^2}2\!+\!\frac d2,\underbrace{\frac d2, \ldots, \frac d2}_{d-1}])\, ,
\end{align*}
where we use the fact that the first row and column of $\H$ has all ones and all remaining rows and columns have an equal number of $\pm 1$'s. The following lower bound carries immediately.
\begin{corollary}
  \label{cor:shifted-hadamard}
   For any $\phi$ map, the average square loss for learning
    the problem $(\phi(\H),([\H,$\newline$-\H]+\one)/2)$ after seeing any $0 \leq k \leq d$
    training instances and using GD on a single linear
    neuron with any initialization is lower bounded by
    $\frac1{2d^2} (d-k-1) \frac d2 = \frac14-\frac {k+1}{4d}.$
\end{corollary}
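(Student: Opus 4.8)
The plan is to treat this as a direct instance of the SVD tail-sum technique of Theorem~\ref{thm:span-hadamrd}, now applied to the shifted target matrix $\Y=\M=([\H,-\H]+\one)/2$, feeding in the spectrum of $\M\M^\top$ that was just computed. First I would recall the structure from the proof of Theorem~\ref{thm:span-hadamrd}: for each of the $2d$ target columns the GD weight vector is the initialization $\w_0$ plus a linear combination of the $k$ seen rows $\phi(\widehat{\H})$, so stacking the predictions over all targets yields a matrix of the form $\phi(\H)\bigl(\phi(\widehat{\H})^\top\A + \w_0\one^\top\bigr)$. The linear-combination part contributes rank at most $k$ and the shared bias $\w_0\one^\top$ contributes one more, so the whole prediction matrix has rank at most $k+1$ regardless of the initialization.

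Next I would invoke Eckart--Young: the best rank-$(k+1)$ approximation of $\M$ in squared Frobenius norm has error exactly $\sum_{i=k+2}^{\rank(\M)} s_i^2$, so the total square loss summed over all $d\times 2d$ predictions is at least this tail sum. Here the already-computed eigensystem of $\M\M^\top$ does all the work: $\rank(\M)=d$, the top squared singular value is $s_1^2=\tfrac{d^2}2+\tfrac d2$, and the remaining $d-1$ are all equal to $\tfrac d2$. Since we discard the top $k+1$ singular values and $k\ge 0$, the very large $s_1^2$ is always among those discarded, leaving a perfectly flat tail of $d-k-1$ values each equal to $\tfrac d2$; hence the tail sum is $(d-k-1)\tfrac d2$.

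Finally I would normalize. The problem has $d$ examples and $2d$ targets, so there are $2d^2$ predictions; dividing the total-loss lower bound by $2d^2$ gives the average square loss $\ge \tfrac1{2d^2}(d-k-1)\tfrac d2 = \tfrac14-\tfrac{k+1}{4d}$, as claimed.

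The one point I would be careful about is the normalization together with the role of the dominant top singular value. Unlike the $\pm1$ matrices $\H$ and $[\H,-\H]$, the $\{0,1\}$ matrix $\M$ has $\|\M\|_{\text{F}}^2=d^2$, which is \emph{not} equal to the number of predictions $2d^2$; so the average must be taken over the $2d^2$ entries rather than renormalized by $\|\M\|_{\text{F}}^2$ as in the statement of Theorem~\ref{thm:span}. The reason the bound stays clean is an elegant cancellation: the shift to $\{0,1\}$ injects a single dominant singular value along the all-ones (DC) direction, and this is precisely the direction the rank-one bias term $\w_0\one^\top$ from the arbitrary initialization can represent for free; once it is removed, the residual spectrum is flat, which is exactly the regime in which the tail-sum bound is effective.
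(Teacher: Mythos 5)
Your proposal is correct and follows essentially the same route as the paper: the rank-$(k{+}1)$ prediction matrix from Theorem~\ref{thm:span-hadamrd} combined with the Eckart--Young tail sum of the computed spectrum of $\M\M^\top$, where the dominant eigenvalue $\frac{d^2}{2}+\frac d2$ is absorbed among the discarded top $k+1$ and the flat remainder gives $(d-k-1)\frac d2$. Your explicit remark that the normalization must be by the $2d^2$ predictions rather than by $\Vert\M\Vert_{\text{F}}^2=d^2$ is a correct reading of what the paper does implicitly.
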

We now consider a two-layer network with fully connected linear layers having weights
$\underset{m,h}{\W^{(1)}}$ and $\underset{h,1}{\w^{(2)}}$,
respectively, where $m$ is the input dimension and $h$ is the number of hidden units.
Assume we are learning the problem
$(\X\!=\!\phi(\Y),\Y)$ with gradient descent based on
the first $k$ training examples $(\Xtr\!=\!\X_{1:k,:},\ytr\!=\!\y_{1:k})$.
\begin{theorem}
  \label{thm:two-layer}
    After seeing $k$ training examples $(\Xtr,\ytr)$,
    GD training keeps the combined weight $\W^{(1)}\w^{(2)}$ in the
    span of $[\W_0^{(1)}\w_0^{(2)},\W_0^{(1)}(\W_0^{(1)})^\top\Xtr^\top,\Xtr]$,
    which is rank $\le 2k+1$.
\end{theorem}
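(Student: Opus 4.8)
The plan is to follow the two factors $\W^{(1)}$ and $\w^{(2)}$ through the gradient-descent iterates separately, pin each into a low-dimensional subspace, and only then form the product. Writing the scalar output as $\yh=\x^\top\W^{(1)}\w^{(2)}$ and $\delta=\partial L/\partial\yh$ for the scalar loss derivative, one gradient step on a training example $(\x,y)$ has the rank-structured form
\[
\W^{(1)}\leftarrow\W^{(1)}-\eta\,\delta\,\x\,(\w^{(2)})^\top,\qquad
\w^{(2)}\leftarrow\w^{(2)}-\eta\,\delta\,(\W^{(1)})^\top\x.
\]
Every $\x$ that is ever used is a row of $\Xtr$, so I set $\mathcal{X}\coloneqq\range(\Xtr^\top)$ (the span of the training inputs, with $\dim\mathcal{X}\le k$); this is exactly the last generator block in the statement. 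The whole argument is insensitive to the flavor of gradient descent --- stochastic, mini-batch, full-batch, or step-dependent learning rates --- since these only replace each update by a sum of terms of the same two shapes.

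First I would prove the bottom-layer invariant: for every $t$, each column of $\W_t^{(1)}-\W_0^{(1)}$ lies in $\mathcal{X}$. This is a one-line induction, because the increment added to $\W^{(1)}$ is $-\eta\,\delta\,\x(\w^{(2)})^\top$, whose every column is a multiple of the training input $\x\in\mathcal{X}$. Its payoff is the splitting
\[
\W_t^{(1)}\w_t^{(2)}=\W_0^{(1)}\w_t^{(2)}+\bigl(\W_t^{(1)}-\W_0^{(1)}\bigr)\w_t^{(2)},
\]
in which the second term is a combination of columns of $\W_t^{(1)}-\W_0^{(1)}$ and hence lies in $\mathcal{X}$. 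It therefore suffices to control $\W_0^{(1)}\w_t^{(2)}$, and for that it is enough to confine the hidden vector $\w_t^{(2)}$ itself.

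The heart of the proof is the second invariant: $\w_t^{(2)}$ never leaves the fixed subspace $\mathcal{W}\coloneqq\mathrm{span}\bigl(\{\w_0^{(2)}\}\cup\range((\W_0^{(1)})^\top\Xtr^\top)\bigr)$, whose dimension is at most $k+1$. I would argue this by induction on $t$. The increment to $\w^{(2)}$ is a multiple of $(\W_r^{(1)})^\top\x$, which the first invariant lets me split as $(\W_0^{(1)})^\top\x+(\W_r^{(1)}-\W_0^{(1)})^\top\x$; the first summand is a column of $(\W_0^{(1)})^\top\Xtr^\top$ and so lies in $\mathcal{W}$. The apparent obstacle --- and the hard part --- is the second summand: a priori $(\W_r^{(1)})^\top\x$ could point anywhere in the (possibly high-dimensional) range of $\W_0^{(1)}$, which would let $\mathcal{W}$ grow step by step. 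The resolution is that $\W_r^{(1)}-\W_0^{(1)}=\sum_{s<r}(-\eta\,\delta_s)\,\x^{(s)}(\w_s^{(2)})^\top$ is an accumulation of outer products whose right factors are the earlier hidden vectors, so
\[
(\W_r^{(1)}-\W_0^{(1)})^\top\x=\sum_{s<r}(-\eta\,\delta_s)\,\bigl((\x^{(s)})^\top\x\bigr)\,\w_s^{(2)}
\]
is a linear combination of $\w_s^{(2)}$ with $s<r$, each already in $\mathcal{W}$ by the inductive hypothesis. This self-referential collapse is what closes the induction.

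Finally I would assemble the two invariants. Multiplying the membership $\w_t^{(2)}\in\mathcal{W}$ on the left by $\W_0^{(1)}$ gives
\[
\W_0^{(1)}\w_t^{(2)}\in\mathrm{span}\bigl(\{\W_0^{(1)}\w_0^{(2)}\}\cup\range(\W_0^{(1)}(\W_0^{(1)})^\top\Xtr^\top)\bigr),
\]
and feeding this back into the splitting from the first invariant places $\W_t^{(1)}\w_t^{(2)}$ in the span of $\W_0^{(1)}\w_0^{(2)}$, the columns of $\W_0^{(1)}(\W_0^{(1)})^\top\Xtr^\top$, and $\mathcal{X}$ --- precisely the three generator blocks in the statement. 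Counting generators, that span has dimension at most $1+k+k=2k+1$. Everything outside the closure argument of the previous paragraph is routine bookkeeping on the rank-one structure of the two updates.
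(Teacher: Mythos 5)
Your proof is correct and follows essentially the same route as the paper: both arguments establish by induction over the GD iterates that $\W^{(1)}-\W_0^{(1)}$ has columns in $\range(\Xtr^\top)$ and that $\w^{(2)}$ stays in $\mathrm{span}(\w_0^{(2)})+\range\big((\W_0^{(1)})^\top\Xtr^\top\big)$, and then multiply the two factors out to land in the three stated generator blocks. The only (harmless) difference is bookkeeping: the paper closes the $\w^{(2)}$ induction by carrying explicit coefficient objects $\A,\a,\b,c$ through the updates (its Theorem~\ref{thm:2layer-weights}), whereas you close it by expanding $\W_r^{(1)}-\W_0^{(1)}$ as a sum of outer products $\x^{(s)}(\w_s^{(2)})^\top$ and invoking the inductive hypothesis on the earlier hidden vectors.
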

	    The above rank bound (proven in Appendix~\ref{app:two-layer})
	    immediately gives a lower bound of $1-\frac{2k+1}d$
	    for $(\phi(\H),\H)$. We conjecture the lower bound
	    can be improved to $1-\frac{k+1}d$ (which is
	    observed experimentally).
            Since the rank bound of $2k+1$ can be tight, we
	    see that the rank argument gets weak.
For three layers the rank goes up too quickly for this
methodology to be useful.

\section{Conclusion}
We show in this paper that GD trained neural networks with a fully connected input layer
and rotation invariant initialization at the input nodes cannot
sample efficiently learn single components.
Most of our lower bounds use the rows of Hadamard matrices
as instances which use $\pm 1$ features. We show a slightly
weaker but still linear lower bound when the features are
Gaussian and conjecture that the same weaker lower bound also holds
for random $\pm 1$ features.\footnote{%
Note that from random matrix theory we know that the spectral properties of random Gaussian and random $\pm 1$ matrices
are known to be essentially the same \citep{matrix_th}.}.

It would also be interesting to further investigate the
power of fully connected linear networks.
We made some progress by extending the SVD based method of
\cite{span} which provides linear lower
bounds for learning the components of random $\pm 1$ matrices even
when the instances are transformed with an arbitrary $\phi$ map
and when an arbitrary initialization is used.
For this method it is necessary to
bound the rank of a certain weight matrix, but we were only
able to do this for up to two fully connected layers.
Note that all lower bounds on GD trained neural nets with complete
input layers are for a fixed target component when no transformation
is used. We conjecture that the same linear lower bounds
hold for arbitrary transformations if we average over target
components. However, totally different proof techniques
would be needed to prove this conjecture.

Finally, there are many technical open problems regarding the GD
training of the spindly network (Figure \ref{fig:spindly}):
Do the regret bounds still hold when the domain is $[-1,1]^d$
instead of $[0,1]^d$? Is clipping of the predictions
necessary? Experimentally the network still
learns efficiently if all $2d$ weight are initialized with
random positive numbers or if the $d$ bottom weights are
initialized to 0. However the regret bounds of
\citep{wincolt20} do not hold for these cases.

\newpage
\paragraph{Acknowledgments}
Thanks to Vishy Vishwanathan for many inspiring
discussions and for letting us include the concentration theorem of Appendix
\ref{a:randpm} which is due to him. This research was partially
support by NSF grant IIS-1546459.

\bibliography{../refs}

\begin{thebibliography}{16}
\providecommand{\natexlab}[1]{#1}
\providecommand{\url}[1]{\texttt{#1}}
\expandafter\ifx\csname urlstyle\endcsname\relax
  \providecommand{\doi}[1]{doi: #1}\else
  \providecommand{\doi}{doi: \begingroup \urlstyle{rm}\Url}\fi

\bibitem[Amid and Warmuth(2020{\natexlab{a}})]{regretcont}
Ehsan Amid and Manfred~K. Warmuth.
\newblock Reparameterizing mirror descent as gradient descent.
\newblock \emph{arXiv preprint arXiv:2002.10487}, 2020{\natexlab{a}}.
\newblock To appear Advances in Neural Information Processing Systems
  (NeurIPS).

\bibitem[Amid and Warmuth(2020{\natexlab{b}})]{wincolt20}
Ehsan Amid and Manfred~K Warmuth.
\newblock Winnowing with gradient descent.
\newblock In \emph{Conference on Learning Theory (COLT)}, pages 163--182. PMLR,
  2020{\natexlab{b}}.

\bibitem[Bogdanov et~al.(2019)Bogdanov, Sabin, and Vasudevan]{xor}
Andrej Bogdanov, Manuel Sabin, and Prashant~Nalini Vasudevan.
\newblock {XOR} codes and sparse learning parity with noise.
\newblock In \emph{Proceedings of the 2019 Annual ACM-SIAM Symposium on
  Discrete Algorithms}, pages 986--1004, 2019.

\bibitem[Davidson and Szarek(2003)]{DavSza03}
K.~R. Davidson and S.~J. Szarek.
\newblock Local operator theory, random matrices and banach spaces.
\newblock In J.~Lindenstrauss and W.~Johnson, editors, \emph{Handbook of the
  Geometry of Banach Spaces}, volume~1, chapter~8, pages 317--366.
  North-Holland, Amsterdam, 2003.

\bibitem[Derezinski and Warmuth(2014)]{span2}
Michal Derezinski and Manfred~K. Warmuth.
\newblock The limits of squared {E}uclidean distance regularization.
\newblock In \emph{Advances in Neural Information Processing Systems
  (NeurIPS)}, pages 2807--2815, 2014.

\bibitem[Gunasekar et~al.(2017)Gunasekar, Woodworth, Bhojanapalli, Neyshabur,
  and Srebro]{srebro1}
Suriya Gunasekar, Blake~E Woodworth, Srinadh Bhojanapalli, Behnam Neyshabur,
  and Nati Srebro.
\newblock Implicit regularization in matrix factorization.
\newblock In \emph{Advances in Neural Information Processing Systems
  (NeurIPS)}, pages 6151--6159, 2017.

\bibitem[Kamath et~al.(2020)Kamath, Montasser, and Srebro]{pritish}
Pritish Kamath, Omar Montasser, and Nathan Srebro.
\newblock Approximate is good enough: Probabilistic variants of dimensional and
  margin complexity.
\newblock In \emph{Conference on Learning Theory (COLT)}, pages 163--182. PMLR,
  2020.

\bibitem[Kivinen and Warmuth(1997)]{eg}
Jyrki Kivinen and Manfred~K. Warmuth.
\newblock Exponentiated gradient versus gradient descent for linear predictors.
\newblock \emph{Information and Computation}, 132\penalty0 (1):\penalty0 1--63,
  1997.

\bibitem[Maass and Warmuth(1998)]{MW98}
M.~Maass and M.K. Warmuth.
\newblock Efficient learning with virtual threshold gates.
\newblock \emph{Information and Computation}, 141\penalty0 (1):\penalty0
  66--83, February 1998.

\bibitem[Meckes(2004)]{Meckes04}
M.~W. Meckes.
\newblock Concentration of norms and eigenvalues of random matrices.
\newblock \emph{Journal of Functional Analysis}, 211\penalty0 (2):\penalty0
  508--524, June 2004.

\bibitem[Safran and Shamir(2017)]{shamir}
Itay Safran and Ohad Shamir.
\newblock Depth-width tradeoffs in approximating natural functions with neural
  networks.
\newblock In \emph{International Conference on Machine Learning (ICML)}, pages
  2979--2987, 2017.

\bibitem[Sylvester(1867)]{sylvester}
J.J. Sylvester.
\newblock Thoughts on inverse orthogonal matrices, simultaneous
  signsuccessions, and tessellated pavements in two or more colours, with
  applications to {N}ewton's rule, ornamental tile-work, and the theory of
  numbers.
\newblock \emph{The London, Edinburgh, and Dublin Philosophical Magazine and
  Journal of Science}, 34\penalty0 (232):\penalty0 461--475, 1867.

\bibitem[Tao(2012)]{matrix_th}
Terence Tao.
\newblock \emph{Topics in Random Matrix Theory}.
\newblock American Mathematical Society, 2012.

\bibitem[Telgarsky(2016)]{matus}
Matus Telgarsky.
\newblock Benefits of depth in neural networks.
\newblock In \emph{Conference on Learning Theory (COLT)}, pages 1517--1539,
  2016.

\bibitem[Warmuth et~al.(2014)Warmuth, Kot{\l}owski, and Zhou]{rotinv}
M.~K. Warmuth, W.~Kot{\l}owski, and S.~Zhou.
\newblock Kernelization of matrix updates.
\newblock \emph{Journal of Theoretical Computer Science}, 558:\penalty0
  159--178, 2014.
\newblock Special issue for the 23nd International Conference on Algorithmic
  Learning Theory (ALT).

\bibitem[Warmuth and Vishwanathan(2005)]{span}
M.K. Warmuth and S.V.N. Vishwanathan.
\newblock Leaving the span.
\newblock In \emph{Proceedings of the 18th Annual Conference on Learning Theory
  (COLT)}, 2005.

\end{thebibliography}

\appendix
\section{Informal proof of Theorem \ref{t:flippm} for the
case of GD trained neural nets with a complete input layer}
\label{a:informal}
    The problem $(\H,\one)$, i.e. learning the constant target 1 when the
    instances are the (transposed) orthogonal rows of the Hadamard matrix
    $\underset{d,d}{\H}$, is too easy.
    So we feed the net randomly sign flipped rows
    $(s\h^\top,s)$, for $s=\pm 1$, where $\h$ is a row of
    $\H$ in the training set.
    The neural net has a complete input layer that is trained with gradient descent.
    So the weights at the input nodes are linear
    combinations of the $k$ seen instances.
    Thus the input nodes are ``no help'' for
    predicting the labels of the unseen examples
    since they are orthogonal to the seen ones.
    Also the labels of the seen examples are random and
    thus do not contain any information for predicting the
    label of the unseen examples. The unseen sign flipped instances are
    labeled $\pm 1$ with equal probability. So the optimum prediction of
    the unseen $d-k$ examples is 0, and we incur one unit of
    square loss for each. The loss on the seen
    examples is non-negative and thus the total average loss
    on all $d$ examples is at least $\frac{d\!-\!k}d\!  =\!1\!-\!\frac{k}{d}$.
    \Qed

\section{Alternate proof style related to Theorem \ref{t:flippm} that
interleaves duplicated positive and negative rows of Hadamard.}
\label{a:dupl}
Instead of using sign flipped rows of $\H$, we use the following instance matrix:
Matrix $\X_q$ (for $q\ge 1$) consists of $q$ copies of the first row of
$\H$, followed by $q$ copies of minus the first row,
$q$ copies of the second, $q$ copies of minus the second,
$\ldots$, for a total of $2q\,d$ rows.
The label vector $\y_q$ is again the first column of $\X_q$, i.e. it consists of
alternating blocks of $(+1)^q$ and $(-1)^q$.
After receiving the first $k$ examples of $(\X_q,\y_q)$, LLS has average loss
\begin{equation}
    \label{e:wor}
    1-\frac{2q\lceil{\frac{k}{2q}}\rceil}{2q\,d}\, ,
\end{equation}
on all $2q\,d$ examples because it predicts 0 on all unseen block pairs.
(See saw tooth curve in Figure \ref{f:saw} in which each
tooth drops by $\sfrac 1d$ and then stays flat for $2q-1$ steps).
Also if each pair $1\le i \le d$ of
blocks is randomly swapped, i.e.
the $q$ rows of minus row $i$ come before
the $q$ rows of plus row $i$, then after receiving $k$
examples,
the expected average square loss is
lower bounded by the same saw tooth curve,
where the expectation is w.r.t. $d$ random swaps.
\begin{wrapfigure}{r}{0.4\textwidth}
        \vspace{-3mm}
\begin{center}
    \vspace{-2mm}
    \includegraphics[width=.4\textwidth]{./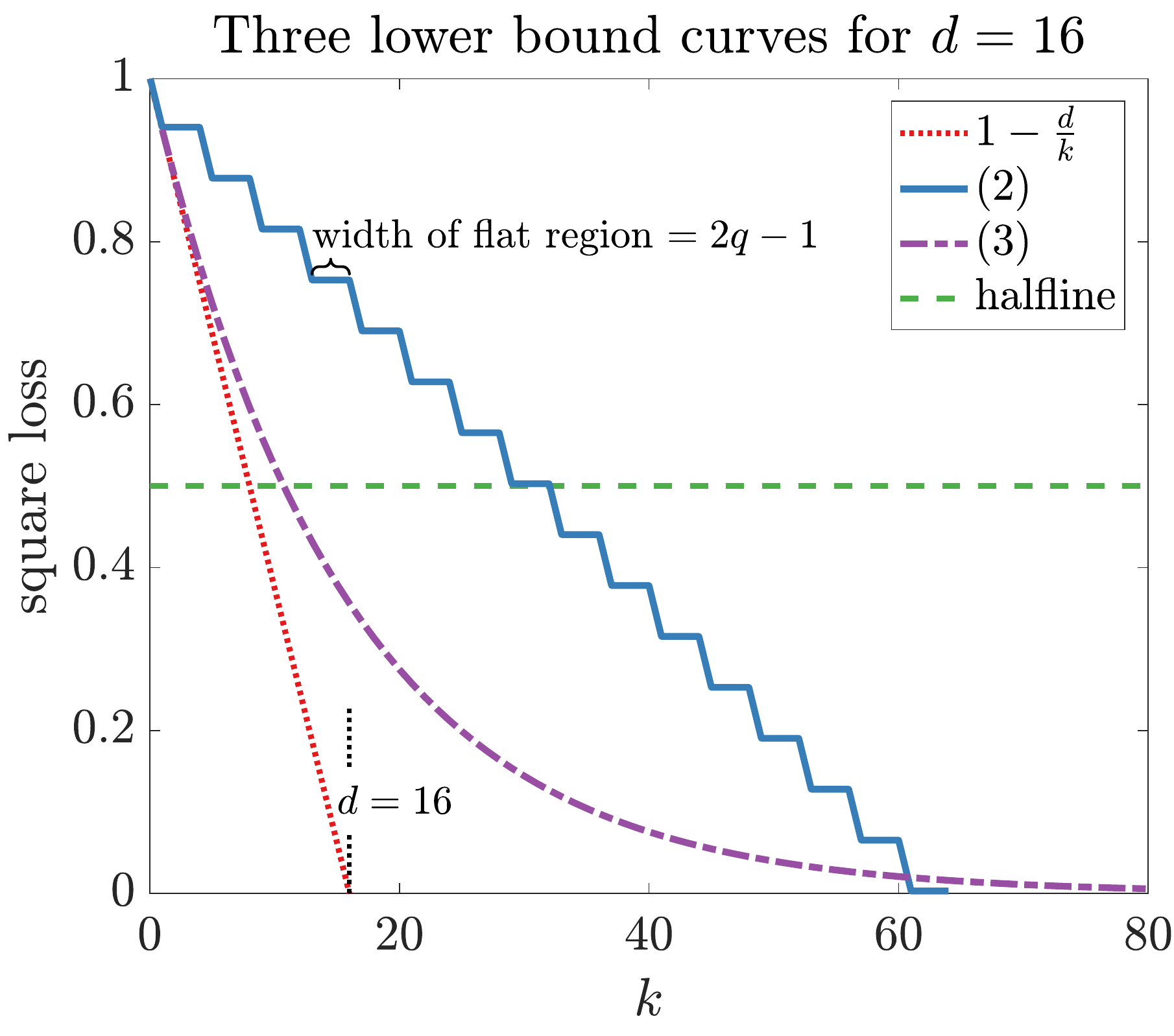}
    \vspace{-0.8cm}
    \caption{Bound of Thm. \ref{t:flippm} (dotted) \& bounds
    for sampling from $(\X_2,\!\y_2)$
    w.o. replacement (solid) and
    i.i.d. (dash-dot).}
\label{f:saw}
\end{center}
\vspace{-0.5cm}
\end{wrapfigure}
This is because in that case the optimal prediction for any
GD trained neural network on all unseen blocks is 0 (proof not shown).
This $q$ ``duplication trick'' moves the point where the loss
is half to $k=q\,d$ and the loss 0 point to $k=2q\,d$ (See
Figure \ref{f:saw}).

Also note that training on the first $k$ examples of random permutations of $(\X_q,\y_q)$
for large $q$ approximates sampling with replacement on
this type of problem. Again the optimal algorithm must
predict zero on all $i$, s.t. row $i$ nor minus row $i$
have not been sampled yet. We believe that our sampling
w.o. replacement lower bounds are more succinct, because
they avoid the coupon collector problem for collecting the $d$ row pairs:
For the sake of completeness, the expected square loss after
picking $k$ i.i.d. examples from $(\X_1,\y_1)$ is
\begin{equation} \label{e:iid}
    (1-\sfrac 1d)^k,
\end{equation}
which is the expected number of pairs not collected yet
(See plot in Figure \ref{f:saw} for comparison).

\section{Proof of Theorem \ref{t:flippm}
for losses that only require the property from Remark \ref{d:loss}}
\label{a:rand}

Here we extend Theorem \ref{t:flippm} to arbitrary losses
which only satisfy the property in Remark \ref{d:loss}. Since we do not assume
 convexity of the loss, we need to consider a randomized rotation invariant algorithm
(as the randomized algorithms are no longer dominated by deterministic ones). Formally,
let $\t$ denote a collection of random variables (independent of the data),
which represent the internal randomization of the algorithms.
Given the training sample $(\X_{\mathrm{tr}},\y_{\mathrm{tr}})$
and the input $\x$, the prediction of the algorithm is a random variable
$\yh^{}_{\t}(\x |(\X_{\mathrm{tr}},\y_{\mathrm{tr}}))$.
The algorithm is rotation invariant if for any orthogonal matrix
$\U$,
\begin{equation}\label{e:rot}
\text{
$\yh^{}_{\t}(\x|  (\X_{\mathrm{tr}},\y_{\mathrm{tr}}))$
and $\yh^{}_{\t}(\U \x| (\X_{\mathrm{tr}}\U^\top,\y_{\mathrm{tr}}))\, ,$}
\end{equation}
are identically distributed.

In the context of neural networks, the random variables $\t$
consist of the random initialization $\W_0$ as well as
additional random variables used by the algorithm.
We assume that $\t$ is not affected by rotating the instances.
Also as in the deterministic algorithm case,
if the weights $\W$ at the bottom layer are updated with
gradient descent, then rotating the instances causes the
weights $\W$ to be counter rotated and the predictions stay unchanged.

We now continue with the general proof using assumption \eqref{e:rot}
for the case of Theorem \ref{t:flippm}.
Take a problem $(\diag(\s) \H, \s)$ with the sign pattern $\s \in \{-1,+1\}^d$ chosen
uniformly at random, and assume the algorithm has received its first $k$ examples,
$(\X_{1:k,:},\s_{1:k})$.
Define a rotation matrix $\U_{\s} = \sfrac{1}{\sqrt{d}}\;\text{diag}(\s) \H$.
The expected total loss of the algorithm on the unseen examples is
\begin{align*}
\mathrm{expected~loss} &=
\mathbb{E}_{\s}\mathbb{E}_{\t}\left[ \sum_{t=k+1}^d
    L\big(s_t, \yh^{}_{\t}(\x_t | (\X_{1:k,:},\s_{1:k}))\big) \right] \\
&\stackrel{(*)}{=}
\mathbb{E}_{\s}\mathbb{E}_{\t}\left[ \sum_{t=k+1}^d
    L\big(s_t, \yh^{}_{\t}(\U_{\s} \x_t | (\X_{1:k,:}\U_{\s}^\top,
\s_{1:k}))\big) \right] \\
&=
\mathbb{E}_{\s}\mathbb{E}_{\t}\left[ \sum_{t=k+1}^d
    L\big(s_t, \yh^{}_{\t}(\sqrt{d} \e_t | (\sqrt{d} \I_{1:k,:},
\s_{1:k}))\big) \right] \\
&= \mathbb{E}_{\t} \mathbb{E}_{\s_{1:k}}\mathbb{E}_{\s_{(k+1):d}|\s_{1:k}}
\left[ \sum_{t=k+1}^d
    L\big(s_t, \yh^{}_{\t}(\sqrt{d} \e_t | (\sqrt{d} \I_{1:k,:},
\s_{1:k}))\big) \right]\\
&\ge \mathbb{E}_{\t} \mathbb{E}_{\s_{1:k}}
\left[ \frac{1}{2}\sum_{t=k+1}^d
\min_{\yh \in \mathbb{R}} \left\{ L\big(-1, \yh) + L \big(+1, \yh) \right\} \right] \\
&= (d-k) \min_{\yh \in \mathbb{R}} \left\{ \frac{1}{2}(L\big(-1, \yh) + L \big(+1, \yh)) \right\}
\ge (d-k) c,
\end{align*}
where in $(*)$ we used the rotation invariance of the algorithm and the last inequality
follows from Remark \ref{d:loss}. Lower bounding the loss of the algorithm
on seen examples by zero, the average expected loss of the rotation invariant algorithm
is at least $\sfrac{1}{d}\,(d-k)c = (1-\sfrac{k}{d})c$. \Qed
\bigskip
It is worthwhile to check if the lower bounds for Hadamard data also hold for other type
of randomization beyond rotation invariance.
As an interesting example, consider reflective invariant distributions.
The distribution $P$ is
{\it reflective invariant} if $\forall \; \t, P(\t)=P(-\t)$.
The question is whether a lower bound can be proven for
a neural network trained with a gradient descent method, when the random initialization
of the weights is reflective invariant.
Unfortunately this is not the case:
Let the randomness $\t$ correspond to a random intialization of the weight
vector
$\w_0 = (s, 0, \ldots,0)$, with $s \in \{-1,1\}$ generated uniformly
at random (so\, $\w_0 = s \w^{\star}$). This is a
reflective invariant initialization.
On the first training example $\x_1$, the bottom neurons evaluate to $\w_0^\top \x_1 = s y_1$, so the upper layers (getting $y_1$ as feedback) can in principle learn to multiply the result by $s$ again to get $\yh_1 = y_1$.
On unseen examples $\x_i$, the bottom neurons again
evaluate to $\w^\top \x_i = \w_0^\top \x_i = s y_i$, and
multiplying by $s$ by upper layers gives zero loss.

\section{Hadamard matrix as an exponential expansion}
\label{a:hadam}
The simplest way to construct Hadamard matrices of
dimension
$d=2^q$ is to use the following recursive construction
credited to Sylvester~\citep{sylvester}:
$$\H_1=[1],\quad
\H_2=\begin{bmatrix}
1&1\\1&-1\end{bmatrix},\ldots,\quad
\H_{q+1}=\begin{bmatrix} \;\H_q&\;\H_q\\\;\H_q&-\H_q\end{bmatrix}\,.
$$
Note that the first row and column of these matrices only
consists of $+1$'s. We make use of this convenient fact in the paper.

Curiously enough this matrix is an expansion of all $\log
d$ $\pm 1$ bit patterns to $2^d$ product features. For
example, for $d=3$:

\begin{minipage}{.9\textwidth}
    $$
 d\overbrace{\left\{
     \begin{pmatrix}
    	     +&+&+\\
	     -&+&+\\
	     +&-&+\\
	     -&-&+\\
	     +&+&-\\
	     -&+&-\\
	     +&-&-\\
             -&-&-
     \end{pmatrix}
	    \right.}^{\log d} \;\stackrel{{\psi}}{\longrightarrow}\;
    \overbrace{
    \begin{pmatrix}
	+&+&+&+& +&+&+&+\\
	+&-&+&-& +&-&+&-\\
	+&+&-&-& +&+&-&-\\
	+&-&-&+& +&-&-&+\\
	+&+&+&+& -&-&-&-\\
	+&-&+&-& -&+&-&+\\
	+&+&-&-& -&-&+&+\\
	+&-&-&+& -&+&+&-
    \end{pmatrix}}^d
    $$

    \vspace{-.55cm}
    \hspace{2cm}
    \rotatebox[origin=tl]{90}{%
	\renewcommand{\arraystretch}{1.375}
    \begin{tabular}{r}
	\\[-.0cm]
	$b_1$ \\
	$b_2$ \\
	$b_3$ \\[1.45cm]
	$1$ \\
	$b_1$ \\
	$b_2$ \\
	$b_1b_2$ \\
	$b_3$\\
	$b_1b_3$ \\
	$b_2b_3$ \\
	$b_1b_2b_3$ \\
    \end{tabular}
    }
\end{minipage}

\vspace{-5cm}
Now observe that the XOR product features are hard
to learn by any time efficient algorithm from the $\log d$
bit patterns when the features are noisy. However
the above exponential expansion $\psi$
makes it possible for the EGU algorithm or the GD trained
spindly network to crack this cryptographically hard problem using essentially
$\log d$ examples (albeit in time that is exponential in $\log d$).
In the noisy case, the number of examples also only grows
with $\log d$ (\cite{wincolt20}). Note that $\log d$ is
also the Vapnik Chervonienkis dimension of the XOR problem.

Can kernel based algorithm also learn this noisy XOR
problem? First observe that the dot product between two
size $d$, $\psi$-expanded feature vectors
(i.e. rows of the Hadamard matrix) can be computed in
$O(\log d)$ time using the following reformulation of the dot product:
	$${\psi}(\bm{b})\!\cdot\!{\psi}(\mathbf{\tilde{b}})=
	    \sum_{I\subseteq 1..\log d} \;\prod_{i\in I} b_i \tilde{b}_i
	    \;=\;\prod_{i=1}^{\log d} (1\!+\!b_i\tilde{b}_i)\,.$$
This looks promising! However already in \citep{span} it
was shown that the Hadamard problem (a reformulation of the XOR problem
as seen above) cannot be learned
in a sample efficient way by any kernel method using any kernel feature map $\phi$
(including the above one): After receiving $k$
of the $d$ examples, the averaged loss on all $d$ example
is at least $1-\sfrac kd$.\footnote{%
See related discussion in the introduction of \citep{MW98}
on learning DNF formulas with the Winnow algorithm.}
The key in these lower bounds is to average over
features and examples, and also exploit the fact that
the weight space of kernel method has low rank and
the SVD spectrum of the problem matrix $\H$ is flat: $\log d\,
\underset{d,1}{\one}$.
In this paper we greatly expand this proof methodology in
Section \ref{s:sing}.
Notice that before the expansion, the SVD spectrum of
the $d \times \log d$ matrix of all bit patterns has the much
shorter flat SVD spectrum of $\log d \;\underset{\log d,1}{\one}$.

\section{Completing the proof of Theorem \ref{t:permute}}
\label{app:permutation_theorem}
    Since the target $\h$ has an equal number of $\pm$ labels, the number of
    positive unseen labels $\ELL$ is a random variable which follows
    a hyper geometric distribution, i.e.
    the number of successes in $d-k$ draws without replacement from a population of size
    $d$ that contains $\sfrac d2$ successes. Thus the mean and variance are
    \[ \mathbb{E}[\ELL] = \frac{d-k}{2}
    \quad \text{and}\quad \mathrm{Var}(\ELL)
    = \frac{(d-k)k}{4(d-1)}
    = \mathbb{E}[\ELL^2] - (\mathbb{E}[\ELL])^2 \,.
    \]

    The expected total loss on all unseen examples is thus at least
    \begin{align*}
    \frac{4}{d-k} \mathbb{E}[\ELL(d-k-\ELL)]
    &= \frac{4}{d-k} (\mathbb{E}[\ELL](d-k) - \mathbb{E}[\ELL^2]) \\
    &= \frac{4}{d-k} \left(\frac{(d-k)^2}{2} - \mathrm{Var}(\ELL) - \frac{(d-k)^2}{4} \right) \\
    &= (d-k) - \frac{k}{d-1} = d - \frac{kd}{d-1}\,.
    \end{align*}

    \vspace{-9mm}
    \Qed

\section{Proof of Theorem \ref{thm:spherically_symmetric}}
\label{app:Gaussians_proof}

We start by showing that a rotation invariant algorithm has the same average
expected (with respect to a random draw of $\X$)
loss for any $\w^{\star}$ with $\|\w^{\star}\|=1$. Indeed take two
such vectors $\w^{\star}_1$ and $\w^{\star}_2$ and note that there exists an orthogonal
transformation $\U$ such that $\w^{\star}_2 = \U \w^{\star}_1$. Due to rotation invariance
of the Gaussian distribution, the instance matrix
$\X$ has the same distribution as the instance matrix $\X \U^\top$, and thus the average
expected loss of \emph{any} algorithm on problem $(\X,\X \w^{\star}_1)$ and
$(\X \U^\top, \X \U \w^{\star}_1) = (\X \U^\top, \X \w^{\star}_2)$
is the same. Additionally, if the algorithm is rotation invariant,
its predictions (and thus the average loss) on the problems
    $(\X \U^\top,\X \w^{\star}_2)$
    and $(\X \U^\top \U,\X \w^{\star}_2) = (\X, \X \w^{\star}_2)$ are the same. Thus, we conclude that a rotation invariant algorithm
has the same average expected loss on $(\X, \X \w^{\star}_1)$ and
$(\X, \X \w^{\star}_2)$.

We now consider a problem in which $\w^{\star}$ itself is drawn uniformly from a unit sphere
in $\mathbb{R}^d$,
and show that \emph{every} algorithm (not necessarily rotation invariant) has
average expected loss at least $(1 - \sfrac kd)^2$ on all examples (where the expectation
is with respect to a random choice of $\w^{\star}$ and $\X$).
It then follows from the previous paragraph that a rotation invariant algorithm
has average expected loss at least $(1 - \sfrac kd)^2$ for any choice of $\w^{\star}$.

Thus, let $\w^{\star}$ be drawn uniformly from a unit sphere.
The covariance matrix of $\w^{\star}$ is $c\I$ from the spherical symmetry of the
distribution, and
$c$ is easily shown to be $\sfrac 1d$:
\[
cd = \tr(c \I) = \tr\left(\mathbb{E}\left[\w^{\star} \w^{\star} {}^\top\right]\right)
= \mathbb{E} \left[\tr(\w^{\star} \w^{\star} {}^\top) \right]
= \mathbb{E} \left[ \|\w^{\star}\|^2 \right] = \mathbb{E} [1] = 1\, .
\]
Let $(\X_{1:k,:}, \y_{1:k})$ be a subset of $k$ examples seen by the algorithm.
Take any unseen example $\x_i$ ($i > k$).
Decompose the loss of the algorithm on this example as
$\mathbb{E}[(y_i - \yh_i)] = \mathbb{E}[y_i^2 - 2\yh_i y_i + \yh^2]$,
where the expectation is over both $\w^{\star}$ and $\X$. The first term is easy to calculate:
\[
\mathbb{E}[y_i^2] = \mathbb{E}[\x_i^\top \w^{\star} \w^{\star}{}^\top \x_i]
= \mathbb{E}[\x_i^\top \underbrace{\mathbb{E}_{\w^{\star}}[\w^{\star} \w^{\star}{}^\top ]}_{=\I/d}\x_i]
= \frac{1}{d}\mathbb{E}[\|\x_i\|^2] = \frac{d}{d} = 1\, ,
\]
where we used the fact that $\mathbb{E}[\|\x_i\|^2] = \tr (\mathbb{E}[\x_i \x_i^\top])
= \tr (\I) = d$.

For the second term, decompose $\w^{\star} = \w_{\|} + \w_{\perp}$,
into the part within
the span of the seen examples (rows of $\X_{1:k,:}$), and the part orthogonal to
the span:
$\w_{\|} \in \text{span}\{\X_{1:k,:}\}$, $\w_{\perp}
\perp \text{span}\{\X_{1:k,:}\}$.
Note that $\w^{\star} = \w_{\|} + \w_{\perp}$
has the same distribution as $\w^{-\star} =
\w_{\|} - \w_{\perp}$. The is easily seen if we eigen decompose
$\X_{1:k,:}^\top \X_{1:k,:}
= \sum_{j=1}^r \lambda_i \v_j \v_j^\top$, where $r = \mathrm{rank}(\X_{1:k,:}) \le k$,
and form an orthogonal matrix $\U = 2\sum_{j=1}^r \v_j \v_j^\top - \I$. Since
$\v_j^\top \w_{\perp} = 0$ for any $j \le r$ and $\w_{\|}
\in \mathrm{span}(\v_1,\ldots,\v_r)$,
\[
\U \w^{\star} = \U \w_{\|} + \U \w_{\perp} = \w_{\|} -
\w_{\perp} = \w^{-\star}\, ,
\]
so the two vectors are rotations of each other, therefore having the same distribution.
Note that the prediction of the algorithm $\yh_i$ does not depend on $\w_{\perp}$,
because
the labels $\y_{\|}$ observed by the algorithm only depend on $\w_{\|}$:
$y_j = \x_j^\top \w^{\star} = \x_j^\top \w_{\|}$ for any $\x_j \in
\text{span}\{\X_{1:k,:}\}$. Thus:
\begin{align*}
\mathbb{E}[\yh_i \, y_i]
&= \mathbb{E}[\yh \, \x_i^\top \w^{\star} ]
= \frac{1}{2} \mathbb{E}[\yh \x_i^\top \, (\w^{\star} + \w^{-\star})]
= \frac{1}{2} \mathbb{E}[2\, \yh \, \x_i^\top \w_{\|}] \\
&= \mathbb{E}[\yh \, \x_{\|}^\top \w_{\|}]
= \mathbb{E}[\yh \, \x_{\|}^\top \w^{\star}]\, ,
\end{align*}
where $\x_{\|}$ is a projection of $\x_i$ onto $\text{span}\{\X_{1:k,:}\}$,
and we used the facts that $\x_i^\top \w_{\|} = (\x_{\|} + \x_{\perp})^\top
\w_{\|} = \x_{\|}^\top \w_{\|}$
and similarly
that $\x_{\|}^\top \w^{\star} = \x_{\|}^\top (\w_{\|} + \w_{\perp})
= \x_{\|}^\top \w_{\|}$.
This gives:
\[
\mathbb{E}[(y_i - \yh_i)^2] =
\mathbb{E}[y_i^2 - 2 \yh_i y_i + \yh_i^2]
= 1 + \mathbb{E}\left[\yh_i^2 -2 \, \yh_i \, \x_{\|}^\top \w^{\star}
\right]\, .
\]
The term under expectation is minimized for $\yh_i = \x_{\|}^\top \w^{\star}$:
\[
\mathbb{E}[(y_i - \yh_i)]  \ge 1 - \mathbb{E}[(\x_{\|}^\top \w^{\star})^2]\, .
\]
We now upper bound the last expectation:
\[
\mathbb{E}[(\x_{\|}^\top \w^{\star})^2]
= \mathbb{E}[\x_{\|}^\top \underbrace{\mathbb{E}_{\w^{\star}}[\w^{\star}\w^{\star}{}^\top]}_{=\I/d} \x_{\|}]
= \frac{1}{d}\, \mathbb{E} [\|\x_{\|}\|^2]
\le \frac{k}{d}\, .
\]
%\Red{Lost here: why did $\E$ below change to $\E_{\x_i}$?
%Needs justification!
%What is $\x_i$? Better index with $\x_j$. What is $\u_i$?
%Recall that it is a basis of ?}

The last inequality follows from the fact that when conditioning on $\X_{1:k,:}$ and taking expectation with respect to $\x_i$:
\[
\mathbb{E}_{\x_i}[\|\x_{\|}\|^2]
= \sum_{j=1}^r \mathbb{E}_{\x_i} \left[ (\v_j^\top \x_i)^2 \right]
= \sum_{j=1}^r \u_j^\top \underbrace{\mathbb{E}_{\x_i}[\x_i \x_i^\top]}_{=\I} \u_j
= \sum_{i=1}^r \|\u_j\|^2 = r\, ,
\]
where $r = \mathrm{rank}(\X_{1:k,:})$ and $\{\u_1,\ldots,\u_r\}$ are the eigenvectors of $\X_{1:k,:}^\top \X_{1:k,:}$ with non-zero eigenvalues (which form a basis for $\mathrm{span}(\X_{1:k,:})$).
As $\x_i$ was chosen arbitrarily, the expected loss on every unseen example is thus
at least
$1- \frac{k}{d}$. Lower bounding the loss on the seen examples by zero, the
average expected loss is at least
\[
\frac{1}{d}\left(0 \cdot k + \left(1-\frac{k}{d}\right) \cdot (d-k) \right)
= \left(1-\frac{k}{d}\right)^2\, .
\]

\section{Proof of Theorem \ref{thm:least_squares_optimal} (optimality of least-squares)}
\label{app:LS}

The least squares algorithm predicts with
\[
\w = \X_{1:k,:}^{\dagger} \y_{1:k} =
\X_{1:k,:}^{\dagger} \X_{1:k,:} \w^{\star}\,.
\]
Note that $\X_{1:k,:}^{\dagger} \X_{1:k,:}$ is an orthogonal projection
onto span of $\X_{1:k,:}\,$.
Let $\P = \I - \X_{1:k,:}^{\dagger} \X_{1:k,:}$ be the complementary projection.
For any unseen example $\x_i$, the algorithm predicts with $\yh_t = \x_i^\top \w$
and incurs loss
\[
(y_i - \yh_i)^2 = (\x_i^{\top} \w^{\star} - \x_i^{\top} \X_{1:k,:}^{\dagger} \X_{1:k,:} \w^{\star})^2
= (\x_i^{\top} \P \w^{\star})^2
= \w^{\star}{}^\top (\P \x_i \x_i^{\top} \P) \w^{\star}\,.
\]
Taking expectation over $\x_i$:
\[
\mathbb{E}_{\x_i}[(y_i - \yh_i)^2 ]
= \w^{\star}{}^\top (\P \underbrace{\mathbb{E}_{\x_i}[\x_i \x_i^{\top}]}_{=\I} \P) \w^{\star}
= \w^{\star} \P \w^{\star}\,,
\]
where we used $\P\P = \P$ from the properties of the projection operator.
Taking expectation over $\X_{1:k,:}$ we get
$\mathbb{E}[\P] = c \I$, for some constant $c \geq 0$,
due to spherical symmetry of the distribution of $\X$. The
constant $c$ can be evaluated to $\frac{d-k}{d}$ by:
\[
cd = \tr(c \I) = \tr(\mathbb{E}[\P]) = \mathbb{E}[\tr(\P)] =
\mathbb{E}[\mathrm{rank}(\P)] = d-k\,,
\]
where we used the fact that due to spherical symmetry of the distribution, with
probability one all rows
in $\X$ are in general positions and thus any $d-k$ rows of $\X$ span a subspace of rank $d-k$.
Thus the loss on any unseen example is given
by $\mathbb{E}[(y_i - \yh_i)^2] = \frac{d-k}{d} \|\w^{\star}\|^2 = \frac{d-k}{d}$,
while the loss on every seen example is zero, as $\x_i^\top \w
= \x_i^\top \X_{1:k,:}^{\dagger} \X_{1:k,:} \w^{\star}
= \x_i^\top \w^{\star} = y_i$ for any $i \in \{1,\ldots,k\}$.
Thus, the total loss expected loss on all examples is $(d-k)(1-\sfrac{k}{d})$,
which translates to the average expected loss of
$\big(1-\frac{k}{d}\big)^2$.

\section{Proof of Theorem~\ref{thm:two-layer}}
\label{app:two-layer}
Consider a two-layer network with fully connected linear layers having weights
$\underset{d,h}{\W^{(1)}}$ and $\underset{h,1}{\w^{(2)}}$ where $d$
is the input dimension and $h$ is the number of hidden
units. Given input $\underset{k,d}{\Xtr}\!=\!\X_{1:k,:}$ and target
$\underset{k,1}{\ytr}\!=\!\y_{1:k}$, we consider the square loss
  $\half\, \Vert\underbrace{\Xtr\W^{(1)}\,\w^{(2)}-\ytr}_{\coloneqq
    \underset{k, 1}{\bm{\delta}}}\Vert^2$.
The following theorem characterized the column span of the combined weights $\W^{(1)}\w^{(2)}$ after seeing examples in $\Xtr$.
\begin{theorem}
  \label{thm:2layer-weights}
  Let $\underset{d, h}{\W_0^{(1)}}$ and
  $\underset{h, 1}{\w_0^{(2)}}$ be the initial weights of a two
    layer fully connected linear network. After seeing
    examples $\Xtr$ with GD training on any target
    $\ytr$, the weights have the form $\W^{(1)} = \W_0^{(1)} +
    \Xtr^\top (\A\,\Xtr\,\W_0^{(1)} + \a\,(\w_0^{(2)})^\top)$ and
    $\w^{(2)} = c\w_0^{(2)} + (\W_0^{(1)})^\top \Xtr^\top\b$ for some $c
    \in \RR$, $\underset{k, d}{\A}$, $\underset{k, 1}{\a}$,
    and $\underset{k, 1}{\b}$.
\end{theorem}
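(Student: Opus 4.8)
The plan is to establish Theorem~\ref{thm:2layer-weights} by induction on the number of gradient-descent steps, after which the rank bound of Theorem~\ref{thm:two-layer} drops out by simply multiplying out the two claimed forms: every term of $\W^{(1)}\w^{(2)}$ then lands in one of the three blocks $\W_0^{(1)}\w_0^{(2)}$, $\W_0^{(1)}(\W_0^{(1)})^\top\Xtr^\top$, or $\Xtr^\top$, whose columns number $1+k+k=2k+1$. The base case is trivial: at initialization the two forms hold with $\A=\bm{0}$, $\a=\bm{0}$, $c=1$, $\b=\bm{0}$. Writing $\bm{\delta}=\Xtr\W^{(1)}\w^{(2)}-\ytr$ for the current residual and $\eta$ for the step size, one full-batch GD step acts by $\W^{(1)}\mapsto\W^{(1)}-\eta\,\Xtr^\top\bm{\delta}\,(\w^{(2)})^\top$ and $\w^{(2)}\mapsto\w^{(2)}-\eta\,(\W^{(1)})^\top\Xtr^\top\bm{\delta}$, and I must check that each update preserves its form.

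The first-layer update is the easy direction. Its gradient $\Xtr^\top\bm{\delta}\,(\w^{(2)})^\top$ already displays the left factor $\Xtr^\top$, so I only transpose the inductive form of $\w^{(2)}$ to get $(\w^{(2)})^\top=c\,(\w_0^{(2)})^\top+\b^\top\Xtr\W_0^{(1)}$, substitute, and pull $\Xtr^\top$ back out. The result is again $\W_0^{(1)}+\Xtr^\top(\A'\Xtr\W_0^{(1)}+\a'(\w_0^{(2)})^\top)$, now with $\A'=\A-\eta\,\bm{\delta}\,\b^\top$ and $\a'=\a-\eta\,c\,\bm{\delta}$. No cross terms appear because the two summands of $(\w^{(2)})^\top$ feed exactly the two summands of the ansatz.

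The second-layer update is where the only real work lies. I substitute the inductive form of $\W^{(1)}$ into $(\W^{(1)})^\top\Xtr^\top\bm{\delta}$. The leading $(\W_0^{(1)})^\top$ produces $(\W_0^{(1)})^\top\Xtr^\top\bm{\delta}$, which stays inside the $\b$-block. The correction term, once transposed, contracts with $\Xtr^\top\bm{\delta}$ and throws up the Gram matrix $\Xtr\Xtr^\top$: its $\A$-part contributes $(\W_0^{(1)})^\top\Xtr^\top(\A^\top\Xtr\Xtr^\top\bm{\delta})$, again in the $\b$-block, whereas its $\a$-part collapses onto the single direction $\w_0^{(2)}$ scaled by the \emph{scalar} $\a^\top\Xtr\Xtr^\top\bm{\delta}$. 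Collecting gives $\w^{(2)}\mapsto c'\,\w_0^{(2)}+(\W_0^{(1)})^\top\Xtr^\top\b'$ with $c'=c-\eta\,\a^\top\Xtr\Xtr^\top\bm{\delta}$ and $\b'=\b-\eta\big(\bm{\delta}+\A^\top\Xtr\Xtr^\top\bm{\delta}\big)$.

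The crux of the argument is precisely this last collapse: I must verify that the $\a\,(\w_0^{(2)})^\top$ block of $\W^{(1)}$, upon transposition and contraction against $\Xtr^\top\bm{\delta}$, leaks no mass outside the span of $\w_0^{(2)}$, so that it can be absorbed into the coefficient $c$ rather than forcing a new, disallowed block; the seemingly dangerous factor $\Xtr\Xtr^\top$ is harmless, being a fixed $k\times k$ matrix that is folded into the updated coefficients. Both updates preserving their forms closes the induction. Finally, since nowhere did the argument use a specific step size or that both layers are updated at once, it carries over verbatim to varying step sizes and to alternating layer updates, by applying the inductive step one layer at a time.
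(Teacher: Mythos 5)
Your proposal is correct and follows essentially the same route as the paper: induction on the gradient-descent steps, with the trivial base case and exactly the same coefficient updates $\A\mapsto\A-\eta\,\bm{\delta}\b^\top$, $\a\mapsto\a-c\eta\,\bm{\delta}$, $c\mapsto c-\eta\,\a^\top\Xtr\Xtr^\top\bm{\delta}$, $\b\mapsto\b-\eta(\I+\A^\top\Xtr\Xtr^\top)\bm{\delta}$. The key observation you highlight --- that the $\a\,(\w_0^{(2)})^\top$ block contracts to a scalar multiple of $\w_0^{(2)}$ --- is precisely the step the paper's computation relies on.
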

\begin{proof}
    The proof proceeds by induction on the updates (we used tilde to denote objects after the update):
    \begingroup
    \allowdisplaybreaks
    \begin{align*}
	\widetilde{\W}^{(1)}&=\W^{(1)} -\eta \nabla_{\W^{(1)}} L
    \\&= \W^{(1)} - \eta\Xtr^\top\del(\w^{(2)})^\top
	\\&=\W^{(1)} -\eta\Xtr^\top \del (c \w_0^{(2)} + (\W_0^{(1)})^\top\Xtr^\top\b)^\top
        \\& = \W_0^{(1)} + \Xtr^\top (\underbrace{(\A - \eta\del\b^\top)}_{\widetilde{\A}}\Xtr\W_0^{(1)} + \underbrace{(\a-c\eta\del)}_{\widetilde{\a}}\,(\w_0^{(2)})^\top)\, ,
        \\[-4mm]
        \widetilde{\w}^{(2)}&=\w^{(2)} -\eta\nabla_{\w^{(2)}} L
	\\ &= \w^{(2)} - \eta (\W^{(1)})^\top \Xtr^\top\del
	\\ &= c\w_0^{(2)} + (\W_0^{(1)})^\top \Xtr^\top\b -
	\Big(\eta (\W_0^{(1)})^\top + \eta\,
	((\W_0^{(1)})^\top\Xtr^\top\A^\top + \w_0^{(2)}\,\a^\top)\Xtr\Big)\Xtr^\top\del
        \\ &= \underbrace{(c-\eta\,\a^\top\Xtr\Xtr^\top\del)}_{\widetilde{c}}\,
	\w_0^{(2)} + (\W_0^{(1)})^\top\Xtr^\top \underbrace{(\b- \eta\,(\I+\A^\top\Xtr)\Xtr^\top\del)}_{\widetilde{\b}}\, .
    \end{align*}
    \endgroup

\vspace{-13mm}  \end{proof}

\vspace{3mm}
\begin{proof} \textbf{\!\!of Theorem}~\ref{thm:two-layer}\;
  We simply combine the observations of Theorem~\ref{thm:2layer-weights}:
  \begin{align*}
  \W^{(1)}\w^{(2)} & = \big(\W_0^{(1)} + \Xtr^\top (\A\,\Xtr\,\W_0^{(1)} + \a\,(\w_0^{(2)})^\top)\big)\big(c\w_0^{(2)} + (\W_0^{(1)})^\top \Xtr^\top\b\big)\\
  & = c\W_0^{(1)}\w_0^{(2)} + \W_0^{(1)}(\W_0^{(1)})^\top
      \Xtr^\top\b + \Xtr^\top(\A\,\Xtr\,\W_0^{(1)} +
      \a\,(\w_0^{(2)})^\top) \w^{(2)},
  \end{align*}
  which lies in the span of $[\W_0^{(1)}\w_0^{(2)},\W_0^{(1)}(\W_0^{(1)})^\top\Xtr^\top,\Xtr]$.
\end{proof}
Note that if $\W_0^{(1)}(\W_0^{(1)})^\top = \underset{d,d}{\I}$ for
$d \geq h$, then $\W^{(1)}\w^{(2)}$ lies in the column span of
$[\W_0^{(1)}\w_0^{(2)},\Xtr^\top]$ and the rank becomes at most $k
+ 1$. Alternatively for $\W_0^{(1)}=\zero$, then $\W^{(1)}\w^{(2)}$
lies in the column span of $\Xtr^\top$ which has rank at
most $k$.

In general however, the rank argument for linear
fully connected neural networks becomes weaker when the number of layers
is increased to $3, 4, \ldots,$ while experimentally, the average loss does
not improve but increasing the number of layers.

\section{Concentration of SVD spectrum of random $\pm 1$
matrices} \label{a:randpm}

Using techniques from \citep{DavSza03} and
\citep{Meckes04}, we will show that the sum of the
last $d-k$ square singular values of a random $d\times d$
$\pm 1$ matrix is concentrated around $1 - c \frac{k}{d}$, where $c>0$ is a constant
independent of $d$.

\begin{theorem}
  \label{t:ourconc}
  Let $\Mb \in \{\pm 1\}^{d \times d}$ be a random matrix and $s_{1}
  \geq s_{2} \ldots \geq s_{d}$ denote its singular values. Then, there
  is a constant $c_E\geq 1$ that does not depend on $d$ such that for
  all $t\geq c \approx 7.09$,
  \begin{align}
    \label{eq:ourconc}
    P\left[ \frac{1}{d^2}\sum_{ i = k +1}^{d} s_{i}^{2}
    \geq 1 - \frac{k}{d} \left(c_E + \frac{t}{\sqrt{d}}  \right)^{2} \right]
    \geq 1 - 4 \; \exp(-\frac{(t - c)^{2}}{4})\, .
  \end{align}
\end{theorem}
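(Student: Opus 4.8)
The plan is to reduce the tail-sum statement to a one-sided deviation bound on the largest singular value $s_1=\|\Mb\|_{\mathrm{op}}$ and then invoke standard concentration for the operator norm of a random sign matrix. First I would exploit that every entry of $\Mb$ is $\pm1$, so $\sum_{i=1}^d s_i^2=\|\Mb\|_F^2=d^2$. Writing the tail as a deficit and bounding each of the $k$ leading squared singular values by $s_1^2$,
\[
\sum_{i=k+1}^{d}s_i^2 \;=\; d^2-\sum_{i=1}^{k}s_i^2 \;\ge\; d^2-k\,s_1^2.
\]
Dividing by $d^2$ and using $d\,(c_E+t/\sqrt d)^2=(c_E\sqrt d+t)^2$, the event in \eqref{eq:ourconc} is \emph{implied} by the deterministic inequality $s_1\le c_E\sqrt d+t$. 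Hence it suffices to show $P\big(s_1\ge c_E\sqrt d+t\big)\le 4\exp(-(t-c)^2/4)$.

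For this I would separate the expected value from the fluctuations. The expected spectral norm of a $d\times d$ random sign matrix satisfies $\E[s_1]\le c_E\sqrt d$ for a universal constant $c_E\ge1$; this is exactly the estimate furnished by the techniques of \citep{Meckes04,DavSza03}, and the lower bound $c_E\ge1$ is forced since $s_1^2\ge \frac1d\sum_i s_i^2=d$ always holds. The fluctuations are governed by Talagrand's concentration inequality for convex Lipschitz functions: viewing $\Mb$ as a point of $\{-1,+1\}^{d^2}$, the map $\Mb\mapsto\|\Mb\|_{\mathrm{op}}$ is convex (being a norm) and $1$-Lipschitz with respect to the Euclidean/Frobenius metric, since $\big|\,\|\Mb\|_{\mathrm{op}}-\|\Mb'\|_{\mathrm{op}}\big|\le\|\Mb-\Mb'\|_{\mathrm{op}}\le\|\Mb-\Mb'\|_F$. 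Talagrand's inequality then yields $P(|s_1-M|\ge u)\le 4\exp(-u^2/4)$ for all $u>0$, where $M$ is a median of $s_1$.

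To reconcile the mean-centering (entering through $c_E$) with the median-centering (in Talagrand's bound), I would control the median--mean gap by integrating the tail:
\[
|M-\E[s_1]|\;\le\; \E|s_1-M| \;=\; \int_0^\infty P(|s_1-M|>u)\,du \;\le\; \int_0^\infty 4e^{-u^2/4}\,du \;=\; 4\sqrt{\pi} \;=:\; c \;\approx\; 7.09.
\]
Consequently, since $c_E\sqrt d\ge\E[s_1]\ge M-c$, for $t\ge c$ the level $c_E\sqrt d+t$ lies at least $t-c\ge0$ above the median, so
\[
P\big(s_1\ge c_E\sqrt d+t\big)\;\le\; P\big(s_1\ge M+(t-c)\big)\;\le\; 4\exp\!\big(-(t-c)^2/4\big),
\]
which together with the first paragraph establishes \eqref{eq:ourconc}.

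I expect the only real obstacle to be importing the two external ingredients with the correct explicit constants: the expected-spectral-norm bound $\E[s_1]\le c_E\sqrt d$ and the precise form $4\exp(-u^2/4)$ of Talagrand's convex concentration inequality, since these pin down $c_E$ and $c=4\sqrt\pi$ in the statement. The reduction via $\sum_{i=1}^k s_i^2\le k\,s_1^2$ and the median--mean integration are then routine, and indeed the factor $4$ and the constant $c\approx 7.09=4\sqrt\pi$ appearing in the theorem are direct fingerprints of this Talagrand-based argument.
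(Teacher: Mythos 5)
Your proposal is correct and follows essentially the same route as the paper: the reduction $\sum_{i>k}s_i^2\ge d^2-k s_1^2$ via $\|\Mb\|_F^2=d^2$, the median-concentration bound $P(|s_1-M|\ge u)\le 4e^{-u^2/4}$ imported from \citep{DavSza03,Meckes04}, the mean--median gap controlled by integrating the tail to get $c=4\sqrt{\pi}=4^{3/2}\Gamma(\sfrac 32)\approx 7.09$, and finally $E\le c_E\sqrt d$. The only cosmetic differences are that you work with a one-sided deviation where the paper states a two-sided one, and that you spell out the Talagrand convex-Lipschitz mechanism that the paper leaves inside the citation.
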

So for $t=\sqrt{d}$, the theorem says that the probability that $Q$ is
at least $1-c'\frac{k}{d}$ (for some constant $c'$ independent of $d$)
is exponentially close to 1.

The following intermediate theorem bounds the expectation and median of the largest
singular values of a random $\pm 1$ matrix. It also shows the
concentration of the matrix norm around its median. We use $\EE[X]$ and
$\MM[X]$ to denote the expectation and median of a random variable
$X$. Recall that the probability of a random variable taking a value
below the median is at most $0.5$.
\begin{theorem}[\citep{DavSza03,Meckes04}]
  \label{t:meckes}
  Let $\Mb \in \{\pm 1\}^{d \times d}$ be a random matrix, $s_{1} \geq
  s_{2} \ldots \geq s_{d}$ denote its singular values, $E := \EE[ s_{1}
  ]$, and $M := \MM[ s_{1} ]$. Then
  \begin{align}
    \label{e:E}
    \sqrt{d} \leq E \leq c_E \sqrt{d}
    \mbox{  and  }
    \sqrt{d} \leq M \leq c_M \sqrt{d}\, ,
  \end{align}
  where $c_E, c_M \geq 1$ are constants that do not depend on $d$. We
  also have that for all non-negative $t$,
  \begin{align}
    \label{eq:pmconc}
    P\left[\, | s_{1} - M | \geq t \right] \leq 4 \exp( -t^{2}/4)\, .
  \end{align}
\end{theorem}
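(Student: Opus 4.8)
The plan is to identify the largest singular value with the operator norm, $s_{1}=\Vert\Mb\Vert_{\mathrm{op}}=\sup_{\Vert\u\Vert=\Vert\v\Vert=1}\u^\top\Mb\v$, and to prove the four claims in the order: the two lower bounds (which are deterministic), then the concentration estimate \eqref{eq:pmconc}, then the upper bound on $E$, and finally the upper bound on $M$. The lower bounds $\sqrt{d}\le E$ and $\sqrt{d}\le M$ require no randomness: since every entry of $\Mb$ is $\pm1$ we have $\sum_{i=1}^{d}s_{i}^{2}=\Vert\Mb\Vert_{\text{F}}^{2}=d^{2}$, and because $s_{1}$ is the largest singular value, $s_{1}^{2}\ge\frac1d\sum_{i=1}^{d}s_{i}^{2}=d$. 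Hence $s_{1}\ge\sqrt{d}$ for \emph{every} realization of $\Mb$, so both its expectation $E$ and its median $M$ are at least $\sqrt{d}$.

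For the concentration bound \eqref{eq:pmconc} I would exploit that $s_{1}$ is a \emph{convex}, $1$-Lipschitz function of the $d^{2}$ independent Rademacher entries of $\Mb$. Convexity holds because $\Vert\Mb\Vert_{\mathrm{op}}$ is a supremum of the linear maps $\Mb\mapsto\u^\top\Mb\v$, and the $1$-Lipschitz property with respect to the Frobenius (Euclidean) metric follows from $\bigl|\Vert\Mb\Vert_{\mathrm{op}}-\Vert\Mb'\Vert_{\mathrm{op}}\bigr|\le\Vert\Mb-\Mb'\Vert_{\mathrm{op}}\le\Vert\Mb-\Mb'\Vert_{\text{F}}$. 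Talagrand's concentration inequality for convex Lipschitz functions on a product of bounded coordinates then yields exactly $P[\,|s_{1}-M|\ge t\,]\le 4\exp(-t^{2}/4)$, matching \eqref{eq:pmconc} with its precise constants.

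For the upper bound $E\le c_E\sqrt{d}$ I would use an $\epsilon$-net argument. Fix a symmetric $\sfrac14$-net $N$ of the unit sphere in $\RR^{d}$ with $|N|\le 9^{d}$; the standard discretization lemma gives $s_{1}\le 2\max_{\u,\v\in N}\u^\top\Mb\v$. For a fixed pair, $\u^\top\Mb\v=\sum_{i,j}u_{i}v_{j}M_{ij}$ is a Rademacher sum whose coefficients satisfy $\sum_{i,j}u_i^{2}v_j^{2}=1$, so Hoeffding gives $P[\u^\top\Mb\v\ge\lambda]\le\exp(-\lambda^{2}/2)$. A union bound over the $\le 81^{d}$ pairs yields $P[s_{1}\ge 2\lambda]\le 81^{d}\exp(-\lambda^{2}/2)$, which decays once $\lambda\gtrsim\sqrt{d}$; integrating this tail bounds $E\le c_E\sqrt{d}$, reproducing the Davidson--Szarek estimate. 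The upper bound on the median then comes for free from the concentration step: $|E-M|\le\EE|s_{1}-M|=\int_{0}^{\infty}P[|s_{1}-M|\ge t]\,dt\le\int_{0}^{\infty}4e^{-t^{2}/4}\,dt=4\sqrt{\pi}$, a constant independent of $d$, so using $\sqrt{d}\ge1$ we obtain $M\le E+4\sqrt{\pi}\le(c_E+4\sqrt{\pi})\sqrt{d}=:c_M\sqrt{d}$.

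I expect the only real obstacle to be the upper bound on $E$. The two lower bounds are deterministic, and \eqref{eq:pmconc} is a direct invocation of Talagrand's convex concentration inequality, whereas controlling $\EE[s_{1}]$ demands the quantitative net-plus-subgaussian computation, where the net cardinality $9^{d}$ and the Hoeffding variance proxy must be balanced so that the exponential union-bound factor is beaten at the scale $\lambda=\Theta(\sqrt{d})$. Once $E$ is pinned at order $\sqrt{d}$, the median bound and hence the entire theorem follow.
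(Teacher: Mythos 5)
First, a point of comparison: the paper does not prove this theorem at all --- it is imported verbatim from \citet{DavSza03} and \citet{Meckes04}, so there is no internal proof to match against. Your sketch reconstructs what is essentially the literature argument, and its architecture is sound: the deterministic lower bounds are correct (every realization satisfies $\sum_i s_i^2 = \Vert\Mb\Vert_{\text{F}}^2 = d^2$, hence $s_1 \ge \sqrt{d}$ pointwise, so both $E$ and $M$ are at least $\sqrt{d}$); the net-plus-Hoeffding-plus-union-bound estimate for $E$ is the standard Rademacher analogue of the Davidson--Szarek Gaussian computation and the constants you quote ($9^d$ net, factor $2$ from a $\sfrac14$-net, variance proxy $\sum_{i,j}u_i^2v_j^2 = 1$) are right; and your mean--median comparison $|E-M| \le \int_0^\infty P[|s_1 - M|\ge t]\,dt \le 4\sqrt{\pi}$ is precisely the computation the paper itself performs in Corollary \ref{cor:concmean} (where $c = 4^{3/2}\Gamma(\sfrac32) = 4\sqrt{\pi} \approx 7.09$).

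The one genuine soft spot is your claim that Talagrand's convex-Lipschitz inequality yields \eqref{eq:pmconc} ``exactly \ldots with its precise constants.'' The standard statement $P[\,|f - \MM f| \ge t\,] \le 4\exp(-t^2/4)$ for convex $1$-Lipschitz $f$ is for product measures with coordinates of \emph{unit} range, i.e. in $[0,1]$. Your coordinates are Rademacher, with range $2$: writing $M_{ij} = 2u_{ij} - 1$ makes $s_1$ a convex $2$-Lipschitz function of $u \in [0,1]^{d^2}$, and the inequality then delivers $P[\,|s_1 - M| \ge t\,] \le 4\exp(-t^2/16)$ --- a loss of a factor $4$ in the exponent relative to \eqref{eq:pmconc}. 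So as executed your step proves the concentration bound only with exponent $-t^2/16$; recovering $-t^2/4$ on the nose requires tracking the exact normalization in \citet{Meckes04} rather than citing the generic $[0,1]$-coordinate form. The discrepancy is harmless downstream, since Corollary \ref{cor:concmean} and Theorem \ref{t:ourconc} only need \emph{some} absolute constants (one would replace $c = 4\sqrt{\pi}$ by $\int_0^\infty 4e^{-t^2/16}dt = 8\sqrt{\pi}$), but as a proof of the statement as literally quoted it falls short by that constant. A minor stylistic note: the bound $M \le c_M\sqrt{d}$ can also be read off directly from your tail bound (pick $\lambda$ with $81^d e^{-\lambda^2/2} < \sfrac12$), with no detour through concentration, which decouples the median estimate from the problematic constant.
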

Since $s_{1}$ is highly concentrated around its median, the following
corollary shows that it is also highly concentrated around its mean:
\begin{corollary}
  \label{cor:concmean}
  Let $\Mb$, $E$, $M$, and $s_{i}$ as above. Then for all $t\geq c =
  4^{\frac{3}{2}} \Gamma(\frac{3}{2}) \approx 7.09$,
  \begin{displaymath}
    P\left[\, |s_{1} - E| \geq t \right] \leq 4 \exp( -(t - c)^{2}/4)\,.
  \end{displaymath}
\end{corollary}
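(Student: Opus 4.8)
The plan is to derive the concentration of $s_1$ around its mean $E$ from the concentration around its median $M$ already established in \eqref{eq:pmconc} of Theorem~\ref{t:meckes}. The whole argument hinges on first showing that $E$ and $M$ cannot differ by more than the constant $c$, and then transferring the tail bound from $M$ to $E$ by a triangle inequality. The key elementary fact I would invoke is the mean--median inequality $|E - M| \le \EE[\,|s_1 - M|\,]$, which is Jensen's inequality applied to the convex map $x \mapsto |x - M|$.

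First I would control the right-hand side via the layer-cake (tail-integral) representation of the expectation of the nonnegative random variable $|s_1 - M|$:
\[
\EE[\,|s_1 - M|\,] = \int_0^\infty P\big[\,|s_1 - M| \ge t\,\big]\, \dd t.
\]
Substituting the Gaussian-type tail bound \eqref{eq:pmconc} and crudely bounding the integrand by $4\exp(-t^2/4)$ even on the range where this exceeds $1$, a direct Gaussian integral gives
\[
\EE[\,|s_1 - M|\,] \le \int_0^\infty 4\exp(-t^2/4)\, \dd t = 4\sqrt{\pi} = c,
\]
since $4\sqrt{\pi} = 4^{3/2}\Gamma(\tfrac32) \approx 7.09$ is exactly the constant named in the statement. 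Hence $|E - M| \le c$.

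The remaining step is a triangle-inequality transfer. Because $|s_1 - E| \le |s_1 - M| + |M - E| \le |s_1 - M| + c$, the event $\{|s_1 - E| \ge t\}$ is contained in $\{|s_1 - M| \ge t - c\}$. For $t \ge c$ the shifted threshold $t-c$ is nonnegative, so applying \eqref{eq:pmconc} once more yields
\[
P\big[\,|s_1 - E| \ge t\,\big] \le P\big[\,|s_1 - M| \ge t - c\,\big] \le 4\exp\!\big(-(t-c)^2/4\big),
\]
which is the claimed bound.

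The only subtle point — and the reason the same clean constant $c$ appears in two roles — is recognizing that the Gaussian integral $\int_0^\infty 4\exp(-t^2/4)\,\dd t$ both upper bounds the mean--median gap and equals the threshold shift in the final tail estimate. The slack incurred by letting the tail bound exceed $1$ near $t = 0$ is harmless and is entirely absorbed into this single constant, so no sharper estimate of $|E-M|$ is needed. Notably, the argument uses nothing about random $\pm 1$ matrices beyond the median concentration \eqref{eq:pmconc}; it is a general mean-versus-median comparison, which is why the constant $c_E$ from \eqref{e:E} plays no role here.
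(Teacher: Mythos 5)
Your proof is correct and follows essentially the same route as the paper's: bound $|E-M|\le \EE|s_1-M|$, evaluate the tail integral $\int_0^\infty 4\exp(-t^2/4)\,\dd t = 4^{3/2}\Gamma(\tfrac32)=c$, and then transfer the median tail bound to the mean via the triangle inequality. Your write-up is if anything slightly more careful (explicitly invoking Jensen, noting the crude bound near $t=0$ is harmless, and checking $t-c\ge 0$), but there is no substantive difference.
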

\begin{proof}
  By definition of expectation and Eq.~\eqref{eq:pmconc}
  \begin{align*}
    | E - M | = |\EE[s_{1}] - M| & \leq  \EE |s_{1} - M |\\
    & :=  \int_{0}^{\infty} P\left[ \, |s_{1} - M | \geq t \right] \, dt \\
    & \leq  4 \int_{0}^{\infty} \exp( -t^{2}/4) \, dt =  4^{\frac{3}{2}}
    \Gamma(\frac{3}{2}) = c\, .
  \end{align*}
  Using Eq.~\eqref{eq:pmconc} and the above inequality
  \begin{align*}
    P\left[\, |s_{1} - E | \geq t \right] & \leq
    P\left[\, |s_{1} - M | + | M - E | \geq t \right]
    \\
    & \leq  P\left[\, | s_{1} - M | \geq t - c \right] \\
    & \leq  4 \exp( -(t-c)^{2}/4)\, .
  \end{align*}

    \vspace{-.8cm}
\end{proof}
We are now ready to prove Theorem \ref{t:ourconc} which shows that, with
high probability, the SVD spectrum of $\Mb$ does not decay rapidly.

\vspace{2mm}
\par \noindent{\bf Proof of Theorem \ref{t:ourconc}:}
From the previous corollary, using $E, t \geq 0$, we have
\begin{align*}
  P[ s_{1}^{2} <  (E + t)^{2} ]  & \geq   P[  E - t < s_{1} < E + t ] \\
  & \geq  1 - 4 \exp(-(t-c)^{2}/4)\, ,
\end{align*}
and hence
\begin{align}
  \label{eq:prob}
  P[d^{2} - k s_{1}^{2} >  d^{2} - k (E + t)^{2} ] \geq  1 - 4
  \exp(-(t-c)^{2}/4)\,.
\end{align}
The fact that $s_{1} \geq s_{2} \ldots \geq s_{d}$ and $\sum_{i}
s_{i}^{2} = ||\Mb||^{2}_{F} := \sum_{i,j} |M_{i,j}|^{2} = d^{2}$,
jointly imply that $\sum_{i > k} s_{i}^2 \geq d^{2} - k s_{1}^2$.
Using this and Eq.~\eqref{eq:prob}, we obtain
\begin{align*}
    P\left[ \frac{1}{d^2} \sum_{i = k + 1}^{d} s_{i}^{2} \geq 1 -
    \frac{k}{d^2} (E + t)^{2} \right] \geq 1 - 4 \exp(-(t-c)^{2}/4)\,.
\end{align*}
Since $E\leq c_E \sqrt{d}$ (see Eq.\eqref{e:E}), the LHS of the above
inequality is upper bounded by $ P\left[ \frac{1}{d^2}\sum_{ i = k
+1}^{d} s_{i}^{2} \geq 1 - \frac{k}{d} \left(c_E +
\frac{t}{\sqrt{d}} \right)^{2} \right] $ and the theorem follows.
\hfill \BlackBox

% \bibliographystyleSM{natbib}
% \bibliographySM{../addrefs}
\end{document}